
\documentclass[nohyperref]{article}


\usepackage{amsmath}
\usepackage{amsfonts}
\usepackage{bm}









\def\eqref#1{equation~\ref{#1}}









\def\1{\bm{1}}


\def\ra{{\textnormal{a}}}


\def\rr{{\textnormal{r}}}
\def\rs{{\textnormal{s}}}








\DeclareMathAlphabet{\mathsfit}{\encodingdefault}{\sfdefault}{m}{sl}
\SetMathAlphabet{\mathsfit}{bold}{\encodingdefault}{\sfdefault}{bx}{n}











\newcommand{\E}{\mathbb{E}}



\DeclareMathOperator*{\argmax}{arg\,max}

\usepackage{url}
\usepackage{subcaption}
\usepackage{algorithm}
\usepackage{algorithmic}
\usepackage{amssymb}
\usepackage{xcolor}         
\usepackage{enumerate}
\usepackage{mathabx}

\usepackage{comment}
\usepackage{amsthm}
\usepackage{apxproof}
\usepackage{thmtools, thm-restate}
\usepackage{wrapfig}
\usepackage{adjustbox}
\usepackage[flushleft]{threeparttable}
\usepackage{graphicx}
\usepackage{tcolorbox}
\usepackage[page, header]{appendix}
\usepackage{titletoc}
\usepackage{lipsum}
\usepackage{hyperref}
\usepackage{multirow}
\usepackage{booktabs}
\usepackage{threeparttable}
\usepackage{enumitem}
\usepackage{mathabx}
\usepackage{subcaption}
\hypersetup{
    citecolor=blue,
    colorlinks=true,
    linkcolor=red,
    filecolor=magenta,      
    urlcolor=blue,
linktocpage}
\usepackage{comment}
\DeclareMathAlphabet\mathbfcal{OMS}{cmsy}{b}{n}

\usepackage{hyperref}


\usepackage[accepted]{icml2022}


\usepackage{amsmath}
\usepackage{amssymb}
\usepackage{mathtools}
\usepackage{amsthm}

\usepackage[capitalize,noabbrev]{cleveref}

\theoremstyle{plain}

\theoremstyle{definition}

\theoremstyle{remark}

\usepackage[textsize=tiny]{todonotes}

\icmltitlerunning{Mirror Learning}

\begin{document}

\twocolumn[
\icmltitle{Mirror Learning: A Unifying Framework of Policy Optimisation}



\icmlsetsymbol{equal}{*}

\begin{icmlauthorlist}
\icmlauthor{Jakub Grudzien Kuba}{ox}
\icmlauthor{Christian Schroeder de Witt}{ox}
\icmlauthor{Jakob Foerster}{ox}
\end{icmlauthorlist}

\icmlaffiliation{ox}{University of Oxford}

\icmlcorrespondingauthor{Jakub Grudzien Kuba}{jakub.grudzien@new.ox.ac.uk}

\icmlkeywords{Machine Learning, ICML}

\vskip 0.3in
]



\printAffiliationsAndNotice{}  




\begin{abstract}
Modern deep reinforcement learning (RL) algorithms are \textit{motivated} by either the generalised policy iteration (GPI) or trust-region learning (TRL) frameworks. 
However, algorithms that \textit{strictly respect} these theoretical frameworks have proven unscalable.
Surprisingly, the only known scalable algorithms violate the GPI/TRL assumptions, e.g. due to required regularisation or other heuristics. The current explanation of their empirical success is essentially ``by analogy'': they are deemed approximate adaptations of theoretically sound methods. Unfortunately, studies have shown that in practice these algorithms differ greatly from their conceptual ancestors. 
In contrast, in this paper we introduce a novel theoretical framework, named \emph{Mirror Learning}, which provides theoretical guarantees to a large class of algorithms, including TRPO and PPO. While the latter two exploit the flexibility of our framework, GPI and TRL fit in merely as pathologically restrictive corner cases thereof. This suggests that the empirical performance of state-of-the-art methods is a direct consequence of their theoretical properties, rather than of aforementioned approximate analogies. 
Mirror learning sets us free to boldly explore novel, theoretically sound RL algorithms, a thus far uncharted wonderland.


\end{abstract}
\vspace{-15pt}
\section{Introduction}
The generalised policy iteration \citep[GPI]{sutton2018reinforcement} and trust-region learning \citep[TRL]{trpo} frameworks lay the foundations for the design of the most commonly used reinforcement learning (RL) algorithms. At each GPI iteration, an RL agent first evaluates its \textit{policy} by computing a scalar \textit{value function}, and then updates its policy so as to maximise the value function at every environment state. This procedure serves well for Markov Decision Problems \citep[MDPs]{bellman1957markov} with small state spaces, as it is guaranteed to produce a new policy whose value at every state improves monotonically over the old one. However, the sizes of state spaces that many practical problem settings constitute  are intractable to exact implementations of GPI. 
Instead, large scale settings employ function approximation and sample based learning. Unfortunately, such an adoption of GPI has proven unstable~\citep{mnih2015human}, which has necessitated a number of adjustments, such as replay buffers, target networks, etc., to stabilise learning \citep{van2016deep}. In their days, these heuristics have been empirically sucessful but were not backed up by any theory and required extensive hyperparameter tuning \citep{mnih2016asynchronous}. 
\begin{figure}[!ttbh]
    \centering
    \includegraphics[width=70mm]{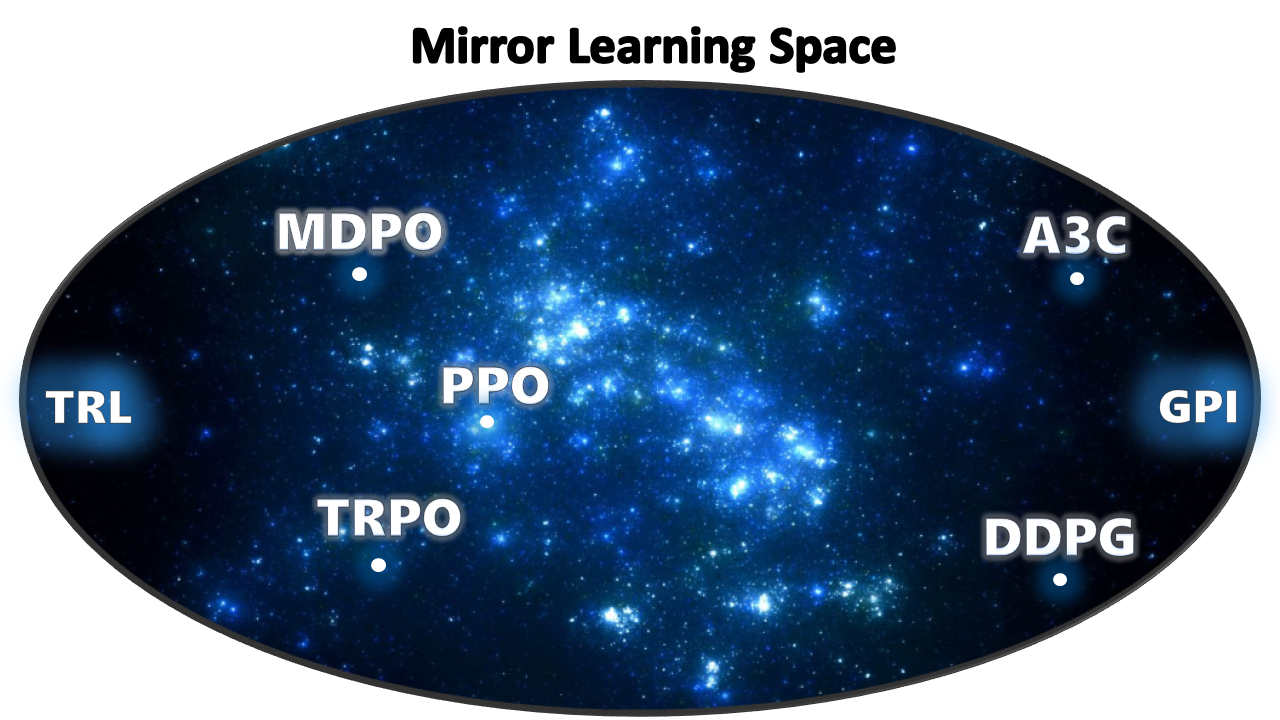}
    \vspace{-10pt}
    \caption{Known RL frameworks and algorithms as points in the infinite space of \textit{theoretically sound} mirror learning algorithms.}
    \vspace{-10pt}
    \label{fig:mirror-learning-galaxy}
\end{figure}

Instead, TRL improves the robustness of deep RL methods by optimising a \textit{surrogate} objective, which restricts the policy update size at each learning iteration while preserving monotonic improvement guarantees \citep{trpo}. To this end it introduces a notion of distance between policies, e.g. through evaluating the maximal KL-divergence. Unfortunately, these types of measures do not scale to large MDPs that TRL was meant to tackle in the first place (since small problems can be solved by GPI).

Nevertheless, TRL's theoretical promises and intuitive accessibility inspired a number of algorithms based on heuristic approximations thereof. This paradigm shift led to substantial empirical success: First, in \textit{Trust-Region Policy Optimization} \citep[TRPO]{trpo}, a hard constraint mechanism on the policy update size was introduced. Second, \textit{Proximal Policy Optimization} \citep[PPO]{ppo} replaced the hard constraint with the \textit{clipping objective} which, intuitively, disincentivises large policy updates. Since they were published, these algorithms have both been applied widely, resulting in state-of-the-art (SOTA) performance on a variety of benchmark, and even real-world, tasks \cite{trpo,ppo,berner2019dota}.

There is a stark contrast between the empirical success and the lack of theoretical understanding, which is largely limited to ``proof by analogy'': For example, PPO is regarded as \textit{sound} since it approximates an algorithm compliant with TRL. 
However, recent studies concluded that PPO arbitrarily exceeds policy update size constraints, thus fundamentally violating TRL principles \cite{wang2020truly, engstrom2020implementation}, which shows that PPO's empirical success does not follow from TRL. On a higher level, this reveals a concerning lack of theoretical understanding of the perhaps most widely used RL algorithm.

To reconnect theory with practice, in this paper we introduce a novel theoretical framework, named \emph{Mirror Learning}, which provides theoretical guarantees to a large class of known algorithms, including TRPO \cite{trpo}, PPO \citep{ppo}, and MDPO \citep{tomar2020mirror}, to name a few, as well as to myriads of algorithms that are yet to be discovered. Mirror learning is a general, principled policy-learning framework that possesses monotonic improvement and optimal-policy convergence guarantees, under arbitrary update-size constraints.
 Intuitively, a mirror-learning policy update maximises the current value function while keeping the, arbitrarily specified, update cost (called \textit{drift}) small. The update is further constrained within a \textit{neighbourhood} of the current policy, that can also be specified arbitrarily.

Since TRL and GPI were the only anchor points for theoretically sound policy optimisation, prior unsuccessful attempts to proving the soundness of PPO and other algorithms tried to shoe-horn them into the overly narrow confines of these frameworks~\citep{liu2019neural, wang2020truly, queeney2021generalized}. 
 Mirror learning shows that this was a flawed approach: Rather than shoe-horning the algorithms, we radically expand the space of theoretically sound methods, naturally covering PPO et al in their current, practical formulations. Our work suggests that the empirical performance of these state-of-the-art methods is a direct consequence of their theoretical properties, rather than of aforementioned approximate analogies. 
 Rather than being limited by the confines of two singular anchor points (GPI/TRL), mirror learning sets us free to explore an endless space of possible algorithms, each of which is already endowed with theoretical guarantees. 
 
We also illustrate the explanatory power of mirror learning and use it to explain a number of thus far \textit{unexplained} observations concerning the performance of other algorithms in the literature. 

Finally, we show that mirror learning allows us to view policy optimisation as a search on a directed graph, where the total path-weight between any two nodes in an optimisation path is upper bounded by the optimality gap between them, which we confirm experimentally. We also analyse proof-of-concept instantiations of mirror learning that use a variety of different neighbourhood and drift functions. 
\section{Background}
\vspace{-5pt}
In this section, we introduce the RL problem formulation and briefly survey state-of-the-art learning protocols.
\vspace{-5pt}
\subsection{Preliminaries}
\vspace{-5pt}
We consider a Markov decision process (MDP) defined by a tuple $\langle \mathcal{S}, \mathcal{A}, r, P, \gamma, d\rangle$. Here, $\mathcal{S}$ is a discrete state space, $\mathcal{A}$ is a discrete action space\footnote{Our results extend to any compact state and action spaces. However, we work in the discrete setting in this paper for clarity.}, $r:\mathcal{S}\times \mathcal{A} \rightarrow [-R_{\max}, R_{\max}]$ is the bounded reward function, $P:\mathcal{S}\times\mathcal{A}\times\mathcal{S} \rightarrow [0, 1]$ is the probabilistic transition function, $\gamma \in [0, 1)$ is the discount factor, and $d\in\mathcal{P}(\mathcal{S})$ (here $\mathcal{P}(X)$ denotes the set of probability distributions over a set $X$) is the initial state distribution. At time step $t\in\mathbb{N}$, the agent is at a state $\rs_t$, takes an action $\ra_t$ according to its stationary policy $\pi(\cdot|\rs_t)\in\mathcal{P}(\mathcal{A})$, receives a reward $r(\rs_t, \ra_t)$, and moves to the state $\rs_{t+1}$, whose probability distribution is $P(\cdot|\rs_t, \ra_t)$. The whole experience is evaluated by the \textit{return}, which is the discounted sum of all rewards
\begin{align}
\vspace{-5pt}
    \rr^{\gamma}_{\infty} \triangleq  \sum\limits_{t=0}^{\infty}\gamma^t r(\rs_t, \ra_t).\nonumber
\end{align}
The state-action and state value functions that evaluate the quality of states and actions, by providing a proxy to the expected return, are given by
\begin{align}
\vspace{-5pt}
    &\quad Q_{\pi}(s, a) \triangleq \E_{\rs_{1:\infty}\sim P, \ra_{1:\infty}\sim\pi}\big[ \rr^{\gamma}_{\infty} \ | \ \rs_0 = s, \ \ra_0 = a\big], \nonumber\\
    & \quad \quad V_{\pi}(s) \triangleq \E_{\ra_0\sim\pi, \rs_{1:\infty}\sim P, \ra_{1:\infty}\sim\pi}\big[ \rr^{\gamma}_{\infty} \ | \ \rs_0 = s \big], \nonumber
\end{align}
respectively. The \textit{advantage function}, defined as
\begin{align}
\vspace{-5pt}
    A_{\pi}(s, a) \triangleq Q_{\pi}(s, a) - V_{\pi}(s),\nonumber
\end{align}
estimates the advantage of selecting one action over another.
The goal of the agent is to learn a policy that maximises the \textit{expected return}, defined as a function of $\pi$,
\begin{align}
    \vspace{-5pt}
    \eta(\pi) \triangleq \E_{\rs_0\sim d, \ra_{0:\infty}\sim\pi, \rs_{0:\infty}\sim P}\big[\rr^{\gamma}_{\infty}\big] = \E_{\rs\sim\rho_{\pi}, \ra\sim\pi}\big[ r(\rs, \ra) \big].\nonumber
\end{align}
Here, $\rho_{\pi}(s)\triangleq \sum\limits_{t=0}^{\infty}\gamma^t \text{Pr}(\rs_t=s|\pi)$ is the (improper) marginal state distribution.
Interestingly, the set of solutions to this problem always contains the \textit{optimal policies}---policies $\pi^*$, for which 
$Q_{\pi^*}(s, a) \triangleq Q^{*}(s, a) \geq Q_{\pi}(s, a)$ holds for any $\pi\in\Pi\triangleq\bigtimes_{s\in\mathcal{S}}\mathcal{P}(\mathcal{A})$, $s\in\mathcal{S}, a\in\mathcal{A}$. Furthermore, the optimal policies are the only ones that satisfy the \textit{optimality equation} \citep{sutton2018reinforcement}
\begin{align}
    \vspace{-5pt}
    \pi^*(\cdot|s) = \argmax_{\bar{\pi}(\cdot|s)\in \mathcal{P}(\mathcal{A})}\E_{\ra\sim\bar{\pi}}\big[Q_{\pi^*}(s, \ra)\big], \ \forall s\in\mathcal{S}. \nonumber
\end{align}
In the next subsections, we survey the most fundamental and popular approaches of finding these optimal policies.

\subsection{Generalised Policy Iteration}
\vspace{-5pt}
One key advantage of the \textit{generalised policy iteration} (GPI) framework is its simplicity. Even though the policy influences both the rewards and state visitations, GPI guarantees that simply reacting \textit{greedily} to the value function, 
\begin{align}
\vspace{-5pt}
    \label{eq:gpi}
    \pi_{\text{new}}(\cdot|s) = \argmax_{\bar{\pi}(\cdot|s)\in\mathcal{P}(\mathcal{A})} \E_{\ra\sim\bar{\pi}}\big[ Q_{\pi_{\text{old}}}(s, \ra)\big], \ \forall s\in\mathcal{S}
\end{align}
guarantees that the new policy obtains higher expected returns at every state, \emph{i.e.,} $V_{\pi_{\text{new}}}(s)\geq V_{\pi_{\text{old}}}(s), \ \forall s\in\mathcal{S}$. Moreover, this procedure converges to the set of optimal policies \citep{sutton2018reinforcement}, which can be seen intuitively by substituting a fixed-point policy into Equation (\ref{eq:gpi}). 

In settings with small, discrete action spaces GPI can be executed approximately without storing the policy variable $\pi$ by responding greedily to the state-action value function. This gives rise to \textit{value-based learning} \citep{sutton2018reinforcement}, where the agent learns a Q-function with the Bellman-max update
\begin{align}
\vspace{-5pt}
    Q_{\text{new}}(s, a) = r(s, a) + \gamma\cdot \E_{\rs'\sim P}\big[ \max_{a'\in\mathcal{A}}Q_{\text{old}}(\rs', a') \big], \nonumber
\end{align}
which is known to converge to $Q^{*}$ \citep{Watkins1992}, and has inspired design of a number of methods \citep{mnih2015human, van2016deep}.

Another, approximate, implementation of GPI is through the policy gradient (PG) algorithms \citep{sutton:nips12}. These are methods which optimise the policy $\pi_\theta$ by parameterising it with $\theta$, and updating the parameter in the direction of the gradient of the expectd return, given by
\begin{align}
\vspace{-5pt}
    \nabla_{\theta}\eta(\pi_\theta)|_{\theta=\theta_{\text{old}}}=\E_{\rs\sim\rho_{\pi_{\theta_{\text{old}}}}}\Big[ \nabla_{\theta}\E_{\ra\sim\pi_{\theta}}\big[ Q_{\pi_{\theta_{\text{old}}}}(\rs, \ra) \big]\big|_{\theta=\theta_{\text{old}}} \Big],\nonumber
\end{align}
which is the gradient of the optimisation objective of GPI from Equation (\ref{eq:gpi}) weighted by $\rho_{\pi_{\theta_{\text{old}}}}$. An analogous result holds for (continuous) deterministic policies \citep{silver2014deterministic, lillicrap2015continuous}. Thus, PG based algorithms approximately solve the GPI step, with a policy in the neighbourhood of $\pi_{\theta_{\text{old}}}$, provided the step-size $\alpha>0$ in the update $\theta_{\text{new}} = \theta + \alpha \nabla_{\theta}\eta(\pi_{\theta})|_{\theta=\theta_{\text{old}}}$ is sufficiently small. PG methods have played a major role in applications of RL to the real-world settings, and especially those that involve continuous actions, where some value-based algorithms, like DQN, are intractable \citep{williams1992simple, baxter2001infinite, mnih2016asynchronous}.
\vspace{-5pt}
\subsection{Trust-Region Learning}
\vspace{-5pt}
In practice, policy gradient methods may suffer from the high variance of the PG estimates and training instability \citep{kakade2002approximately, sugiyama}. \textit{Trust-region learning} (TRL) is a framework that aims to solve these issues. At its core lies the following policy update
\begin{align}
\vspace{-5pt}
    \label{eq:trust-region}
    &\pi_{\text{new}} = \argmax_{\bar{\pi}\in\Pi}\E_{\rs\sim\rho_{\pi_{\text{old}}}, \ra\sim\bar{\pi}}\big[ A_{\pi_{\text{old}}}(\rs, \ra) \big] - C\text{D}_{\text{KL}}^{\text{max}}(\pi_{\text{old}}, \bar{\pi}),\nonumber\\
    &\quad \quad \quad \quad \text{where }C = 4\gamma \max_{s, a}|A_{\pi_{\text{old}}}(s, a)|/(1-\gamma)^2.
\end{align}
It was shown by \citet{trpo} that this update guarantees the monotonic improvement of the return, \emph{i.e.,} $\eta(\pi_{\text{new}}) \geq \eta(\pi_{\text{old}})$. Furthermore, the KL-penalty in the above objective ensures that the new policy stays within the neighbourhood of $\pi_{\text{old}}$, referred to as the \textit{trust region}. This is particularly important with regard to the instability issue of the PG-based algorithms \citep{duan2016benchmarking}.

Although the exact calculation of the max-KL penalty is intractable in settings with large/continuous state spaces, the algorithm can be heuristically approximated, e.g. through \textit{Trust Region Policy Optimization} \citep[TRPO]{trpo}, which performs a constrained optimisation update
\begin{align}
\vspace{-5pt}
    &\quad \quad \pi_{\text{new}} = \argmax_{\bar{\pi}\in\Pi}\E_{\rs\sim\rho_{\pi_{\text{old}}}, \ra\sim\bar{\pi}}\big[ A_{\pi_{\text{old}}}(\rs, \ra) \big] \nonumber\\
    &\text{subject to }\E_{\rs\sim\rho_{\pi_{\text{old}}}}\big[ \text{D}_{\text{KL}}\big(\pi_{\text{old}}(\cdot|\rs), \pi_{\text{new}}(\cdot|\rs) \big) \big] \leq \delta,\nonumber 
\end{align}
where $\delta>0$.  Despite its deviation from the original theory, empirical results suggest that TRPO approximately maintains the TRL properties. However, in order to achieve a simpler TRL-based heuristic, \citet{ppo} introduced \textit{Proximal Policy Optimization} (PPO) which updates the policy by
\begin{align}
\vspace{-5pt}
    \label{eq:ppo}
    &\quad \quad \quad \quad \pi_{\text{new}} = \argmax_{\bar{\pi}\in\Pi} \E_{\rs\sim\rho_{\pi_{\text{old}}}, \ra\sim\pi_{\text{old}}}\big[ L^{\text{clip}}\big],\\
    &L^{\text{clip}}= \min\Big( \rr(\bar{\pi}) A_{\pi_{\text{old}}}(s, a), \text{clip}\big( \rr(\bar{\pi}), 1\pm\epsilon\big)A_{\pi_{\text{old}}}(s, a)\Big),\nonumber
\end{align}
where for given $s, a$, $\rr(\bar{\pi})\triangleq \bar{\pi}(a|s)/\pi(a|s)$. 
The \textit{clip} operator truncates $\rr(\bar{\pi})$ to $1-\epsilon$ (or $1+\epsilon$), if it is below (or above) the threshold interval. Despite being motivated from a TRL perspective, PPO violates its very core principles by failing to constrain the update size---whether measured either by KL divergence, or by the likelihood ratios \citep{wang2020truly, engstrom2020implementation}. Nevertheless, PPO's ability to stabilise policy training has been demonstrated on a wide variety of tasks, frequently resulting in SOTA performance \citep{henderson2018deep, berner2019dota}. This begs the question how PPO's empirical performance can be justified theoretically. Mirror learning, which we introduce in the next section, addresses this issue.
\vspace{-10pt}
\section{Mirror Learning}
\vspace{-5pt}
In this section we introduce \textit{Mirror Learning}, state its theoretical properties, and explore its connection to prior RL theory and methodology.
\vspace{-5pt}
\subsection{The Framework}
\vspace{-5pt}
We start from the following definition.
\begin{restatable}{definition}{definedrift}
\label{def:drift}
A \textit{drift functional} $\mathfrak{D}$ is a map 
\begin{align}
    \mathfrak{D}:\Pi\times\mathcal{S} \rightarrow \{ \mathfrak{D}_{\pi}(\cdot|s):
    \mathcal{P}(\mathcal{A}) \rightarrow  \mathbb{R} \}, \nonumber
\end{align}
such that for all $s\in\mathcal{S}$, and $\pi, \bar{\pi}\in\Pi$, writing $\mathfrak{D}_{\pi}(\bar{\pi}|s)$ for $\mathfrak{D}_{\pi}\big(\bar{\pi}(\cdot|s)|s\big)$, the following conditions are met
\begin{enumerate}
    \vspace{-5pt}
    \item  \label{bullet:drif2}$\mathfrak{D}_{\pi}(\bar{\pi}|s)\geq \mathfrak{D}_{\pi}(\pi|s) = 0$ \textit{(nonnegativity)},
    \item  \label{bullet:drif3}$\mathfrak{D}_{\pi}(\bar{\pi}|s)$ has zero gradient\footnote{More precisely, all its G\^ateaux derivatives are zero.} with respect to $\bar{\pi}(\cdot|s)$, evaluated at $\bar{\pi}(\cdot|s)=\pi(\cdot|s)$ \textit{(zero gradient)}.
\end{enumerate}
\vspace{-5pt}
Let $\nu^{\bar{\pi}}_{\pi}(s) \in \mathcal{P}(\mathcal{S})$ be a state distribution that can depend on $\pi$ and $\bar{\pi}$.
The \textit{drift} $\mathfrak{D}^{\nu}$ of $\bar{\pi}$ from $\pi$ is defined as
\begin{align}
\vspace{-5pt}
    \mathfrak{D}^{\nu}_{\pi}(\bar{\pi}) \triangleq \E_{\rs\sim \nu_{\pi}^{ \bar{\pi}}}\big[ \mathfrak{D}_{\pi}(\bar{\pi}|\rs) \big]\nonumber
\end{align}
where $\nu^{\pi}_{\bar{\pi}}$ is such that the expectation is continuous in $\pi$ and $\bar{\pi}$ and monotonically non-decreasing in the total variation distance between them.
The drift is \textit{positive} if $\mathfrak{D}^{\nu}_{\pi}(\bar{\pi}) = 0$ implies $\bar{\pi} = \pi$, and \textit{trivial} if $\mathfrak{D}^{\nu}_{\pi}(\bar{\pi}) = 0$, $\forall \pi, \bar{\pi}\in\Pi$.
\end{restatable}
We would like to highlight that drift is not the Bregman distance, associated with mirror descent \citep{nemirovskij1983problem, beck2003mirror}, as we do not require it to be (strongly) convex, or differentiable everywhere. All we require are, the much milder, continuity and G{\^ a}teaux-differentiability \citep{gateaux1922diverses, hamilton1982global} of the drift functional at $\bar{\pi}(\cdot|s) = \pi(\cdot|s)$.

We introduce one more concept whose role is to generally account for explicit update-size constraints. Such constraints reflect a learner's risk-aversity towards subtantial changes to its behaviour, like in TRPO, or are dictated by an algorithm design, like the learning rate in PG methods. 
\begin{restatable}{definition}{neighbourhood}
We say that $\mathcal{N}:\Pi\rightarrow\mathbb{P}(\Pi)$ is a \textit{neighbourhood operator}, where $\mathbb{P}(\Pi)$ is the power set of $\Pi$, if
\begin{enumerate}
\vspace{-5pt}
    \item It is a continuous map \textit{(continuity)},
    \item Every $\mathcal{N}(\pi)$ is a compact set \textit{ (compactness)},
    \item  There exists a metric $\chi:\Pi\times \Pi\rightarrow\mathbb{R}$, such that $\forall \pi \in \Pi$, there exists $\zeta>0$, such that $\chi(\pi, \bar{\pi})\leq \zeta$ implies $\bar{\pi}\in\mathcal{N}(\pi)$ \textit{(closed ball)}.
\end{enumerate}
\vspace{-5pt}
 The \textit{trivial} neighbourhood operator is $\mathcal{N}\equiv \Pi$.
\end{restatable}

Let $\mathfrak{D}^{\nu}$ be a drift, and $\pi, \bar{\pi}$ be policies. Suppose that $\beta_{\pi}\in\mathcal{P}(\mathcal{S})$ is a state distribution, referred to as a \textit{sampling distribution}, such that $\beta_{\pi}(s) > 0, \forall s\in\mathcal{S}$. Then, the \textit{mirror operator} transforms the value function of $\pi$ into the following functional of $\bar{\pi}$,
\begin{align}
\vspace{-5pt}
\hspace{-10pt}
    [\mathcal{M}^{\pi_{\text{new}}}_{\mathfrak{D}} Q_{\pi_{\text{old}}}](s) = \E_{\ra\sim\pi_{\text{new}}}\big[ Q_{\pi_{\text{old}}}(s, \ra)
    \big] -
\mathfrak{D}_{\pi_{\text{old}}}(\pi_{\text{new}}|s)
    \vspace{-5pt}\nonumber
\end{align}
Note that, in the above definition, $Q_{\pi}$ can be equivalently replaced by $A_{\pi}$, as this only subtracts a constant $V_{\pi}(s)$ independent of $\bar{\pi}$. We will use this fact later.
As it turns out, despite the appearance of the drift penalty, simply acting to increase the mirror operator suffices to guarantee the policy improvement, as summarised by the following lemma, proved in Appendix \ref{appendix:lemmas}.
\begin{restatable}{lemma}{mpi}
\label{lemma:mpi}
Let $\pi_{\text{old}}$ and $\pi_{\text{new}}$ be policies. Suppose that
\begin{align}   
    \label{ineq:what-mirror-step-does}
    [\mathcal{M}^{\pi_{\text{new}}}_{\mathfrak{D}} V_{\pi_{\text{old}}}](s) \geq 
    [\mathcal{M}^{\pi_{\text{old}}}_{\mathfrak{D}} V_{\pi_{\text{old}}}](s), \ \forall s\in\mathcal{S}.
\end{align}
Then, $\pi_{\text{new}}$ is better than $\pi_{\text{old}}$, so that for every state $s$,
\begin{align}
\vspace{-5pt}
    V_{\pi_{\text{new}}}(s) \geq V_{\pi_{\text{old}}}(s). \nonumber
    \vspace{-5pt}
\end{align}
\end{restatable}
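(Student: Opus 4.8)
The plan is to show that the hypothesis forces the one-step expected advantage of $\pi_{\text{new}}$ over $\pi_{\text{old}}$ to be nonnegative at every state, and then to propagate this \emph{local} improvement into a \emph{global}, state-wise value improvement through a monotone Bellman-operator argument. The first move is to evaluate the right-hand side of (\ref{ineq:what-mirror-step-does}). Setting $\bar{\pi}=\pi_{\text{old}}$ in the mirror operator and invoking the nonnegativity condition $\mathfrak{D}_{\pi_{\text{old}}}(\pi_{\text{old}}|s)=0$ together with $\E_{\ra\sim\pi_{\text{old}}}[Q_{\pi_{\text{old}}}(s,\ra)]=V_{\pi_{\text{old}}}(s)$, the penalty term vanishes, so that
\[
[\mathcal{M}^{\pi_{\text{old}}}_{\mathfrak{D}} V_{\pi_{\text{old}}}](s) = V_{\pi_{\text{old}}}(s).
\]
Hence the hypothesis is equivalent to $\E_{\ra\sim\pi_{\text{new}}}[Q_{\pi_{\text{old}}}(s,\ra)] - \tfrac{\nu^{\pi_{\text{new}}}_{\pi_{\text{old}}}(s)}{\beta_{\pi_{\text{old}}}(s)}\mathfrak{D}_{\pi_{\text{old}}}(\pi_{\text{new}}|s) \geq V_{\pi_{\text{old}}}(s)$ for all $s$.

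Next I would discard the penalty. Because $\nu^{\pi_{\text{new}}}_{\pi_{\text{old}}}(s)\geq 0$, $\beta_{\pi_{\text{old}}}(s)>0$, and $\mathfrak{D}_{\pi_{\text{old}}}(\pi_{\text{new}}|s)\geq 0$, the subtracted quantity is nonnegative, so moving it to the other side only strengthens the inequality:
\[
\E_{\ra\sim\pi_{\text{new}}}\big[Q_{\pi_{\text{old}}}(s,\ra)\big] \geq V_{\pi_{\text{old}}}(s), \quad \forall s\in\mathcal{S}.
\]
I would then package this as a fixed-point inequality. Let $T^{\pi_{\text{new}}}$ be the policy-evaluation Bellman operator, $(T^{\pi_{\text{new}}}V)(s) = \E_{\ra\sim\pi_{\text{new}}}[\, r(s,\ra) + \gamma\,\E_{\rs'\sim P}[V(\rs')]\,]$, the unique monotone $\gamma$-contraction whose fixed point is $V_{\pi_{\text{new}}}$. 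Using the Bellman identity $Q_{\pi_{\text{old}}}(s,\ra)=r(s,\ra)+\gamma\,\E_{\rs'\sim P}[V_{\pi_{\text{old}}}(\rs')]$, the displayed inequality is precisely $V_{\pi_{\text{old}}}(s) \leq (T^{\pi_{\text{new}}}V_{\pi_{\text{old}}})(s)$ for every $s$. By monotonicity of $T^{\pi_{\text{new}}}$, iterating preserves the inequality, giving $V_{\pi_{\text{old}}} \leq (T^{\pi_{\text{new}}})^n V_{\pi_{\text{old}}}$ for all $n$; letting $n\to\infty$ and using contraction-convergence of the iterates to the fixed point $V_{\pi_{\text{new}}}$ yields $V_{\pi_{\text{old}}}(s)\leq V_{\pi_{\text{new}}}(s)$ at every state, as claimed.

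The conceptual crux, and the only place where care is needed, is the first step: recognising that the drift penalty can only \emph{subtract} a nonnegative quantity, so that an improvement in the penalised mirror objective implies an even larger improvement in the unpenalised expected $Q$-value. After that, the argument is the textbook monotone-contraction proof of policy improvement; equivalently, one could invoke the performance-difference (Kakade--Langford) identity to expand $V_{\pi_{\text{new}}}(s)-V_{\pi_{\text{old}}}(s)$ as a discounted sum of the nonnegative terms $\E_{\ra\sim\pi_{\text{new}}}[A_{\pi_{\text{old}}}(\rs_t,\ra)]$ along trajectories of $\pi_{\text{new}}$ started at $s$, which is manifestly nonnegative. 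I expect no genuine obstacle here: the subtlety is purely in not being distracted by the drift and neighbourhood machinery, which contributes to this particular lemma only through the defining nonnegativity of $\mathfrak{D}$.
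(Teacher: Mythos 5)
Your proof is correct, but it takes a genuinely different route from the paper's. You reduce the lemma to the classical policy-improvement setting: since $\mathfrak{D}_{\pi_{\text{old}}}(\pi_{\text{old}}|s)=0$, the right-hand side of (\ref{ineq:what-mirror-step-does}) equals $V_{\pi_{\text{old}}}(s)$, and since the penalty $\frac{\nu^{\pi_{\text{new}}}_{\pi_{\text{old}}}(s)}{\beta_{\pi_{\text{old}}}(s)}\mathfrak{D}_{\pi_{\text{old}}}(\pi_{\text{new}}|s)$ is nonnegative you may discard it, obtaining $T^{\pi_{\text{new}}}V_{\pi_{\text{old}}}\geq V_{\pi_{\text{old}}}$ pointwise; monotonicity and $\gamma$-contraction of $T^{\pi_{\text{new}}}$ then finish the job. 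The paper instead expands $V_{\pi_{\text{new}}}(s)-V_{\pi_{\text{old}}}(s)$ via the Bellman equations, adds and subtracts the penalty so as to recognise the difference of mirror operators, applies the hypothesis, and solves the resulting recursive inequality $V_{\pi_{\text{new}}}(s)-V_{\pi_{\text{old}}}(s)\geq\gamma\,\E_{\ra\sim\pi_{\text{new}},\rs'\sim P}\big[V_{\pi_{\text{new}}}(\rs')-V_{\pi_{\text{old}}}(\rs')\big]+\frac{\nu(s)}{\beta(s)}\mathfrak{D}_{\pi_{\text{old}}}(\pi_{\text{new}}|s)$ by taking infima over states. The two mechanisms --- iterating a monotone contraction versus bounding the infimum of a recursive inequality --- are equivalent ways of propagating a one-step improvement to the infinite horizon, but the paper's bookkeeping buys something your proof gives away the moment you drop the penalty: the quantitative estimate $\inf_s\big[V_{\pi_{\text{new}}}(s)-V_{\pi_{\text{old}}}(s)\big]\geq\inf_s\frac{\nu(s)}{\beta(s)}\mathfrak{D}_{\pi_{\text{old}}}(\pi_{\text{new}}|s)\,/\,(1-\gamma)$, i.e.\ an improvement at least proportional to the drift incurred, which is precisely the kind of strict-improvement statement that reappears as Property \ref{property1} of Theorem \ref{theorem:fundamental} (the paper re-derives it there, so nothing downstream breaks). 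Your version proves the lemma exactly as stated, and your side remark that a Kakade--Langford performance-difference expansion would also work is equally valid; there is no gap, only a trade-off between the brevity of your reduction and the strength of the paper's intermediate bound.
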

Of course, the monotonic improvement property of the expected return is a natural corollary of this lemma, as 
\begin{align}
    \eta(\pi_{\text{new}}) &= \E_{\rs\sim d}\big[V_{\pi_{\text{new}}}(\rs)] \nonumber\\
    &\geq 
    \E_{\rs\sim d}\big[V_{\pi_{\text{old}}}(\rs)] = \eta(\pi_{\text{old}}). \nonumber
\end{align}
Hence, optimisation of the mirror operator guarantees the improvement of the new policy's performance.
Condition (\ref{ineq:what-mirror-step-does}), however, is represented by $|\mathcal{S}|$ inequalities and solving them may be intractable in large state spaces. Hence, we shall design a proxy objective whose solution simply satisfies Condition (\ref{ineq:what-mirror-step-does}), and admits Monte-Carlo estimation (for practical reasons). For this purpose, we define the update rule of mirror learning.
\begin{restatable}{definition}{mirrorupdate}Let $\pi_{\text{old}}$ be a policy. Then, \textbf{mirror learning} updates the policy by 
\begin{align}
    \label{eq:mirror-update}
    \pi_{\text{new}}
    = \argmax_{\bar{\pi}\in\mathcal{N}(\pi_{\text{old}})}\E_{\rs\sim\beta_{\pi}}\Big[ \big[\mathcal{M}^{\bar{\pi}}_{\mathfrak{D}}V_{\pi_{\text{old}}}\big](\rs)\Big]. 
\end{align}
\end{restatable}
This, much simpler and clearer, problem formulation, is also robust to settings with large state spaces and parameterised policies. Being an expectation, it can be approximated by sampling with the unbiased ``batch'' estimator
\begin{align}
    \label{eq:monte-carlo}
    \hspace{-5pt}
    \frac{1}{|\mathfrak{B}|}\sum\limits_{\rs, \ra\in\mathfrak{B}}\Big[ \frac{\bar{\pi}(\ra|\rs)}{\pi_{\text{old}}(\ra|\rs)}Q_{\pi_{\text{old}}}(\rs, \ra) - \frac{\nu^{\bar{\pi}}_{\pi_{\text{old}}} (\rs)}{\beta_{\pi_{\text{old}}}(\rs)}\mathfrak{D}_{\pi_{\text{old}}}(\bar{\pi}|\rs)\Big]. 
\end{align}
The distribution $\nu^{\bar{\pi}}_{\pi_{\text{old}}}$ can be chosen equal to $\beta_{\pi_{\text{old}}}$, in which case the fraction from the front of the drift disappears, simplifying the estimator. Note also that one may choose $\beta_{\pi_{\text{old}}}$ independently of $\pi_{\text{old}}$. For example, it can be some fixed distribution over $\mathcal{S}$, like the uniform distribution. Hence, the above update supports \textbf{both on-policy and off-policy} learning. In the latter case, the fraction $\bar{\pi}(\ra|\rs)/\pi_{\text{old}}(\ra|\rs)$ is replaced by $\bar{\pi}(\ra|\rs)/\pi_{\text{hist}}(\ra|\rs)$, where $\pi_{\text{hist}}$ is the policy used to sample the pair $(\rs, \ra)$ and insert it to a replay buffer (for more details see Appendix \ref{appendix:off-policy}). Such a Monte-Carlo objective can be optimised with respect to $\bar{\pi}$ parameters by, for example, a few steps of gradient ascent. Most importantly, in its exact form, the update in Equation (\ref{eq:mirror-update}) guarantees that the resulting policy satisfies the desired Condition (\ref{ineq:what-mirror-step-does}).
\begin{restatable}{lemma}{mirrormajor}
\label{lemma:mirror-major}
Let $\pi_{\text{old}}$ be a policy and $\pi_{\text{new}}$ be obtained from the mirror update of Equation (\ref{eq:mirror-update}). Then, 
\begin{align}
         [\mathcal{M}^{\pi_{\text{new}}}_{\mathfrak{D}} V_{\pi_{\text{old}}}](s) \geq 
    [\mathcal{M}^{\pi_{\text{old}}}_{\mathfrak{D}} V_{\pi_{\text{old}}}](s), \ \forall s\in\mathcal{S}.
\nonumber
\end{align}
Hence, $\pi_{\text{new}}$ attains the properties provided by Lemma \ref{lemma:mpi}.
\end{restatable}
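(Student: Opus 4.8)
The plan is to turn the $|\mathcal{S}|$ pointwise inequalities into one global optimality comparison and then localise it to a single state. As a first step I would simplify the right-hand side using the drift axioms: since $\mathfrak{D}_{\pi_{\text{old}}}(\pi_{\text{old}}|s)=0$ by nonnegativity, the penalty in $[\mathcal{M}^{\pi_{\text{old}}}_{\mathfrak{D}}V_{\pi_{\text{old}}}](s)$ disappears and $[\mathcal{M}^{\pi_{\text{old}}}_{\mathfrak{D}}V_{\pi_{\text{old}}}](s)=\E_{\ra\sim\pi_{\text{old}}}[Q_{\pi_{\text{old}}}(s,\ra)]=V_{\pi_{\text{old}}}(s)$. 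The claim thus becomes $g(s)\triangleq[\mathcal{M}^{\pi_{\text{new}}}_{\mathfrak{D}}V_{\pi_{\text{old}}}](s)-V_{\pi_{\text{old}}}(s)\geq 0$ for all $s$. Two preliminary facts will be used throughout: the maximiser of Equation (\ref{eq:mirror-update}) exists because $\mathcal{N}(\pi_{\text{old}})$ is compact and the objective is continuous in $\bar\pi$ (bounded reward, continuous drift and $\nu$), and $\pi_{\text{old}}\in\mathcal{N}(\pi_{\text{old}})$ by the closed-ball axiom.

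Next I would write the objective as $f(\bar\pi)=\sum_{s}\beta_{\pi_{\text{old}}}(s)\,\E_{\ra\sim\bar\pi}[Q_{\pi_{\text{old}}}(s,\ra)]-\sum_{s}\nu^{\bar\pi}_{\pi_{\text{old}}}(s)\,\mathfrak{D}_{\pi_{\text{old}}}(\bar\pi|s)$, where the $\beta_{\pi_{\text{old}}}(s)$ weight has cancelled the denominator of the penalty. The core of the proof is a contradiction by single-state reversion. Suppose $g(s_0)<0$ and let $\tilde\pi$ equal $\pi_{\text{new}}$ everywhere except $\tilde\pi(\cdot|s_0)=\pi_{\text{old}}(\cdot|s_0)$. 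The drift at $s_0$ then vanishes, and expanding $f(\tilde\pi)-f(\pi_{\text{new}})$ shows that all of its $s_0$-dependent terms combine precisely into $-\beta_{\pi_{\text{old}}}(s_0)\,g(s_0)>0$: reverting a ``loss-making'' state to $\pi_{\text{old}}$ raises the objective. If this were the whole difference it would contradict the optimality of $\pi_{\text{new}}$ and force $g(s_0)\geq 0$, completing the proof.

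The difference, however, also contains a cross-state correction, and handling it is where the real work lies. First, one must verify $\tilde\pi\in\mathcal{N}(\pi_{\text{old}})$; since reverting a coordinate moves the policy towards $\pi_{\text{old}}$, I would obtain feasibility from the neighbourhood axioms (the closed-ball property puts an entire $\chi$-ball around $\pi_{\text{old}}$ inside $\mathcal{N}(\pi_{\text{old}})$, with continuity and compactness controlling the rest), or equivalently by perturbing infinitesimally along $\tilde\pi-\pi_{\text{new}}$ and arguing about a feasible direction. Second, and this is the main obstacle: because $\nu^{\bar\pi}_{\pi_{\text{old}}}$ may depend on $\bar\pi$, the reversion perturbs the penalty weights at every other state, leaving the correction $-\sum_{s\neq s_0}\big[\nu^{\tilde\pi}_{\pi_{\text{old}}}(s)-\nu^{\pi_{\text{new}}}_{\pi_{\text{old}}}(s)\big]\mathfrak{D}_{\pi_{\text{old}}}(\pi_{\text{new}}|s)$ in $f(\tilde\pi)-f(\pi_{\text{new}})$. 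This term is identically zero in the standard design $\nu^{\bar\pi}_{\pi_{\text{old}}}=\beta_{\pi_{\text{old}}}$, which gives the transparent version of the argument; for a genuinely $\bar\pi$-dependent $\nu$ I would control it using the continuity of $\nu$ together with the zero-gradient axiom (which makes the total drift $\mathfrak{D}^{\nu}_{\pi_{\text{old}}}$ first-order flat at $\pi_{\text{old}}$), most plausibly through an infinitesimal/limiting version of the reversion so that the correction becomes negligible relative to the favourable term $-\beta_{\pi_{\text{old}}}(s_0)g(s_0)$. Making this domination rigorous is the crux of the lemma.
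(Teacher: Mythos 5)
Your proposal is correct and follows essentially the same route as the paper: the paper also argues by contradiction, reverting the maximiser to $\pi_{\text{old}}$ at the single offending state $s_0$ (their $\hat{\pi}$), deducing $\hat{\pi}\in\mathcal{N}(\pi_{\text{old}})$ from the fact that $\hat{\pi}$ is per-state no further from $\pi_{\text{old}}$ than $\pi_{\text{new}}$, and concluding that the sampled objective strictly increases at $\hat{\pi}$, contradicting the optimality of $\pi_{\text{new}}$. The cross-state correction arising from the $\bar{\pi}$-dependence of $\nu^{\bar{\pi}}_{\pi_{\text{old}}}$, which you single out as the crux, is in fact silently dropped in the paper's proof (it asserts that the difference of expected objectives localises to the $\beta_{\pi_{\text{old}}}(s_0)$ term), so on that point your treatment is more careful than the published one.
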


For proof, see Appendix \ref{appendix:lemmas}. Next, we use the above results to characterise the properties of mirror learning. The improvement and convegence properties that it exhibits not only motivate its usage, but may also contribute to explaining the empirical success of various widely-used algorithms. We provide a proof sketch below, and a detailed proof in Appendix \ref{appendix:theorem}.
\begin{restatable}[The Fundamental Theorem of Mirror Learning]{theorem}{mirrorfundamental} 
\label{theorem:fundamental}
Let $\mathfrak{D}^{\nu}$ be a drift, $\mathcal{N}$ be a neighbourhood operator, and the sampling distribution $\beta_{\pi}$ depend continuously on $\pi$. Let $\pi_0 \in \Pi$, and the sequence of policies $(\pi_n)_{n=0}^{\infty}$ be obtained by mirror learning induced by $\mathfrak{D}^{\nu}$, $\mathcal{N}$, and $\beta_{\pi}$. Then, the learned policies 
\begin{enumerate}[leftmargin=*]
    \vspace{-10pt}
    \item \label{property1} Attain the strict monotonic improvement property, 
    \begin{align}
    \vspace{-5pt}
        \eta(\pi_{n+1}) \ \geq \ \eta(\pi_n) + \E_{\rs\sim d }\Bigg[ \frac{\nu_{\pi_n}^{\pi_{n+1}}(\rs)}{\beta_{\pi_n}(\rs)}\mathfrak{D}_{\pi_n}(\pi_{n+1}|\rs)\Bigg],  \nonumber
    \vspace{-5pt}
    \end{align}
    \item \label{property2} Their value functions converge to the optimal one,
    \begin{align}
    \vspace{-5pt}
        \lim_{n\rightarrow\infty}V_{\pi_n} = V^*,\nonumber
    \vspace{-5pt}
    \end{align}
    \item \label{property3} Their expected returns converge to the optimal return,
    \begin{align}
    \vspace{-5pt}
        \lim_{n\rightarrow\infty}\eta(\pi_n) = \eta^*,\nonumber
    \vspace{-5pt}
    \end{align}
    \item \label{property4} Their $\omega$-limit set consists of the optimal policies.
\end{enumerate}
\end{restatable}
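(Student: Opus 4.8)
The plan is to prove Property \ref{property1} directly from the performance-difference identity, and then bootstrap it, via a compactness argument, into the convergence statements \ref{property2}--\ref{property4}; the decisive ingredient throughout is the \emph{zero gradient} clause of Definition \ref{def:drift}.

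For Property \ref{property1} I would start from Lemma \ref{lemma:mirror-major}: $\pi_{n+1}$ satisfies Condition (\ref{ineq:what-mirror-step-does}). Subtracting $V_{\pi_n}(s)$ and using that $[\mathcal{M}^{\pi_n}_{\mathfrak D}V_{\pi_n}](s)=V_{\pi_n}(s)$ (because $\mathfrak{D}_{\pi_n}(\pi_n|s)=0$ by nonnegativity), this rearranges to the pointwise estimate
\begin{align}
    \E_{\ra\sim\pi_{n+1}}\big[A_{\pi_n}(s,\ra)\big] \ \geq\ \frac{\nu^{\pi_{n+1}}_{\pi_n}(s)}{\beta_{\pi_n}(s)}\,\mathfrak{D}_{\pi_n}(\pi_{n+1}|s) \ \geq\ 0, \quad \forall s\in\mathcal{S}. \nonumber
\end{align}
I would then invoke the performance-difference identity $\eta(\pi_{n+1})-\eta(\pi_n)=\sum_s \rho_{\pi_{n+1}}(s)\,\E_{\ra\sim\pi_{n+1}}[A_{\pi_n}(s,\ra)]$. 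Since each summand is nonnegative and $\rho_{\pi_{n+1}}(s)=d(s)+\sum_{t\geq1}\gamma^t\Pr(\rs_t=s\mid\pi_{n+1})\geq d(s)$, replacing $\rho_{\pi_{n+1}}(s)$ by the smaller $d(s)$ and applying the displayed estimate yields exactly the claimed bound.

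To set up convergence, iterating Lemmas \ref{lemma:mpi}--\ref{lemma:mirror-major} gives $V_{\pi_{n+1}}(s)\geq V_{\pi_n}(s)$ for every $s$; as values are bounded by $R_{\max}/(1-\gamma)$, the sequence $V_{\pi_n}$ is pointwise monotone, hence converges to some $V_\infty$, and $\eta(\pi_n)\nearrow\eta_\infty$. Since $\Pi$ is a finite product of simplices it is compact, so I would extract $\pi_{n_k}\to\bar\pi$, and along a further subsequence $\pi_{n_k+1}\to\tilde\pi$. Because $\pi\mapsto V_\pi,Q_\pi$ are continuous and, by hypothesis, so are $\pi\mapsto\beta_\pi$ and $\mathcal{N}$, Berge's maximum theorem makes $\tilde\pi$ a maximiser of the mirror objective of $\bar\pi$ over $\mathcal{N}(\bar\pi)$; and continuity of $V$ forces $V_{\bar\pi}=V_\infty=V_{\tilde\pi}$, so the update is value-stationary at the limit.

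The main obstacle is the optimality criterion: \emph{if $\pi$ is not optimal, the mirror update strictly increases the objective}, i.e. $\max_{\hat\pi\in\mathcal{N}(\pi)}\E_{\rs\sim\beta_\pi}[\,[\mathcal{M}^{\hat\pi}_{\mathfrak D}V_\pi](\rs)\,]>\E_{\rs\sim\beta_\pi}[V_\pi(\rs)]$. Suboptimality yields $s_0,a_0$ with $A_\pi(s_0,a_0)>0$; perturbing $\pi$ towards $a_0$ at $s_0$ by $\eps$ raises the advantage term linearly, with positive slope $\beta_\pi(s_0)A_\pi(s_0,a_0)$, whereas the zero-gradient condition of Definition \ref{def:drift} makes the drift penalty $o(\eps)$, and the closed-ball property of $\mathcal{N}$ keeps the perturbation feasible for small $\eps$. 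This ``first order beats higher order'' comparison is the crux and the hardest step, since it must be made rigorous for an arbitrary drift using only Gâteaux differentiability at $\pi$. Given it, I finish as follows: inserting the value-stationarity $V_{\tilde\pi}=V_{\bar\pi}$ into the per-state performance-difference identity, with each advantage term nonnegative and $\rho^{s}_{\tilde\pi}(s)\geq1$, forces $\E_{\ra\sim\tilde\pi}[A_{\bar\pi}(s,\ra)]=0$ for all $s$; hence the objective of $\bar\pi$ at its maximiser $\tilde\pi$ does not exceed $\E_{\rs\sim\beta_{\bar\pi}}[V_{\bar\pi}(\rs)]$. By the criterion this is impossible unless $\bar\pi$ is optimal, proving Property \ref{property4}. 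As all optimal policies share value $V^*$, the limit satisfies $V_\infty=V_{\bar\pi}=V^*$ (Property \ref{property2}), and taking $\E_{\rs\sim d}$ gives $\eta(\pi_n)\to\eta^*$ (Property \ref{property3}).
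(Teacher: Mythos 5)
Your proposal is correct, and its core coincides with the paper's: monotone, bounded value functions converge; Bolzano--Weierstrass and Berge's maximum theorem yield a limit point $\bar\pi$ at which the mirror update is value-stationary; and optimality hinges on exactly the perturbation you flag as the crux---shifting mass towards an advantageous action raises the advantage term linearly, while the zero-gradient clause of Definition \ref{def:drift} makes the drift penalty $o(\eps)$ and the closed-ball property keeps the perturbed policy feasible. The paper's Step 3 makes this rigorous precisely along the lines you anticipate: $\E_{\ra\sim\pi}\big[A_{\bar{\pi}}(s,\ra)\big]$ is affine in $\pi(\cdot|s)$, hence has a constant, strictly positive G\^ateaux derivative in the relevant direction; the dependence of the penalty on $\nu$ is neutralised by the sandwich $0\le\frac{\nu^{\pi}_{\bar{\pi}}(s)}{\bar{\beta}(s)}\mathfrak{D}_{\bar{\pi}}(\pi|s)\le\frac{1}{\bar{\beta}(s)}\mathfrak{D}_{\bar{\pi}}(\pi|s)$, both ends of which have zero derivative; and the perturbation is applied at a single state only, so the modified policy stays in $\mathcal{N}(\bar{\pi})$. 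Where you genuinely depart from the paper is the bookkeeping around this core: for Property \ref{property1}, and again to show that the limit advantages $\E_{\ra\sim\tilde{\pi}}\big[A_{\bar{\pi}}(s,\ra)\big]$ vanish, you invoke the Kakade--Langford performance-difference identity together with $\rho_{\pi}(s)\ge d(s)$, whereas the paper stays self-contained and proves the stronger pointwise estimate $V_{\pi_{n+1}}(s)-V_{\pi_n}(s)\ge\frac{\nu(s)}{\beta(s)}\mathfrak{D}_{\pi_n}(\pi_{n+1}|s)$ (its Inequality (\ref{ineq:strict-majorisation})) by a Bellman-recursion argument in the style of Lemma \ref{lemma:mpi}, integrating against $d$ only at the end. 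Your route is shorter but imports an external identity and delivers only the $d$-averaged bound; the paper's is longer but self-contained and gives per-state improvement. Finally, your endgame---``non-optimal implies the mirror maximum strictly exceeds $\E_{\rs\sim\beta_{\bar{\pi}}}[V_{\bar{\pi}}(\rs)]$,'' contradicted by the value-stationarity bound at $\tilde{\pi}$---is the contrapositive packaging of the paper's Steps 2--4 (limit point is a fixed point of mirror learning, hence a fixed point of GPI, hence optimal); the two are logically equivalent and rest on the same lemmas.
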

\begin{proofsketch}
We split the proof into four steps. In \textit{Step 1}, we use Lemmas \ref{lemma:mpi} \& \ref{lemma:mirror-major} to show that the sequence of value functions $(V_{\pi_k})_{k\in\mathbb{N}}$ converges. In \textit{Step 2}, we show the existance of limit points $\bar{\pi}$ of $(\pi_{k})_{k\in\mathbb{N}}$, and show that they are fixed points of the mirror learning update, which we do by contradiction. The most challenging \textit{Step 3}, is where we prove (by contradiction) that $\bar{\pi}$ is a fixed point of GPI. In the proof, we supposed that $\bar{\pi}$ is not a fixed point of GPI, and use the drift's zero-gradient property (Definition \ref{def:drift}) to show that one could slightly perturb $\bar{\pi}$ to obtain a policy $\pi'$, which corresponds to a higher mirror learning objective. This contradicts $\bar{\pi}$ simultaneously being a fixed point of mirror learning. \textit{Step 4} finalises the proof, by recalling that fixed points of GPI are optimal policies.
\end{proofsketch}
Hence, any algorithm whose update takes the form of Equation (\ref{eq:mirror-update}) improves the return monotonically, and converges to the set of optimal policies. This result provides RL algorithm designers with a template. New instances of it can be obtained by altering the drift $\mathfrak{D}^{\nu}$, the neighbourhood operator $\mathcal{N}$, and the sampling distribution function $\beta_{\pi}$. Indeed, in the next section, we show how some of the most well-known RL algorithms fit this template.
\vspace{-5pt}
\section{Mirror Learning View of RL Phenomena}
We provide a list of RL algorithms in their mirror learning representation in Appendix \ref{appendix:listing}.
\vspace{-5pt}
\paragraph{Generalised Policy Iteration} For the trivial drift $\mathfrak{D}^{\nu}\equiv 0$, and the trivial neighbourhood operator $\mathcal{N} \equiv \Pi$, the mirror learning update from Equation (\ref{eq:mirror-update}) is equivalent to
\begin{align}
\vspace{-5pt}
    \pi_{\text{new}} = \argmax_{\bar{\pi}\in\Pi} \E_{\rs\sim\beta_{\pi_{\text{old}}}}\big[ \E_{\ra\sim\bar{\pi}}\big[ Q_{\pi_{\text{old}}}(\rs, \ra) \big] \big].\nonumber
\vspace{-5pt}
\end{align}
As in this case the maximisation is unconstrained, and the expectation over $\rs\sim\beta_{\pi_{\text{old}}}$ is monotonically increasing in the individual conditional $\E_{\ra\sim\bar{\pi}}\big[ Q_{\pi_{\text{old}}}(s, \ra) \big]$, the maximisation distributes across states
\begin{align}
\vspace{-5pt}
    \pi_{\text{new}}(\cdot|s) = \argmax_{\bar{\pi}(\cdot|s)\in\mathcal{P}(\mathcal{A})} \E_{\ra\sim\bar{\pi}}\big[ Q_{\pi_{\text{old}}}(s, \ra) \big],\nonumber
\vspace{-5pt}
\end{align}
which is exactly the GPI update, as in Equation (\ref{eq:gpi}). Hence, all instances of GPI, \emph{e.g.,} policy iteration \citep{sutton2018reinforcement}, are special cases of mirror learning, and thus inherit its improvement and convergence properties. Furthermore, neural implementations, like A3C \citep{mnih2016asynchronous}, maintain these qualities approximately.

\paragraph{Trust-Region Learning} Let us choose the drift operator $\mathfrak{D}$ to be the scaled KL-divergence, so that $\mathfrak{D}_{\pi}(\bar{\pi}|s) = C_{\pi}\text{D}_{\text{KL}}\big( \pi(\cdot|s), \bar{\pi}(\cdot|s) \big)$, where $C_{\pi}$ is a constant that depends on $\pi$. Further, in the construction of the drift $\mathfrak{D}^{\nu}_{\pi}(\bar{\pi})$, let us set $\nu_{\pi}^{\bar{\pi}}(\rs) = \delta(\rs- s_{\max})$, where $\delta$ is the Dirac-delta distribution, and $s_{\max}$ is the state at which the KL-divergence between $\pi$ and $\bar{\pi}$ is largest. For the neighbourhood operator, we choose the trivial one. Lastly, for the sampling distribution, we choose $\beta_{\pi} = \bar{\rho}_{\pi}$, the normalised version of $\rho_{\pi}$. Then, a mirror learning step maximises
\begin{align}
\vspace{-5pt}
    \E_{\rs\sim\bar{\rho}_{\pi_{\text{old}}}}\Big[ \E_{\ra\sim\pi_{\text{new}}}\big[ A_{\pi_{\text{old}}}(\rs, \ra) \big] \Big] - C_{\pi_{\text{old}}}\text{D}_{\text{KL}}^{\text{max}}(\pi_{\text{old}}, \pi_{\text{new}}),\nonumber
\vspace{-5pt}
\end{align}
which, for appropriately chosen $C_{\pi_{\text{old}}}$, is proportional (and thus equivalent) to the trust-region learning update from Equation (\ref{eq:trust-region}). Hence, the monotonic improvement of trust-region learning follows from Theorem \ref{theorem:fundamental}, which also implies its convergence to the set of optimal policies.

\paragraph{TRPO} The algorithm is designed to approximate trust-region learning, and so its mirror-learning representation is similar. We make the following changes: set the drift operator to $\mathfrak{D}^{\nu}\equiv 0$, and choose the neighbourhood operator to the \textit{average-KL ball}. Precisely, 
\begin{align}
    \mathcal{N}(\pi) = \big\{ \bar{\pi} \in \Pi \ | \ \E_{\rs\sim \bar{\rho}_{\pi}}\big[ \text{D}_{\text{KL}}\big( \pi(\cdot|\rs), \bar{\pi}(\cdot|\rs) \big)\big] \leq \delta \big\}.\nonumber
\end{align}
The resulting mirror learning update is the learning rule of the TRPO algorithm. As a result, even though TRPO was supposed to only approximate the monotonic trust-region learning update \citep{trpo}, this analysis shows that TRPO has monontonic convergence guarantees.

\paragraph{PPO} We analyse PPO through unfolding the clipping objective $L^{\text{clip}}$ (Equation (\ref{eq:ppo})). For a given $s\in\mathcal{S}$, it equals
\begin{align}
    \E_{\ra\sim\pi_{\text{old}}}\big[ \min\big(\rr(\bar{\pi})A_{\pi_{\text{old}}}(s, \ra), \text{clip}\big( \rr(\bar{\pi}), 1\pm\epsilon\big)A_{\pi_{\text{old}}}(s, \ra)\big) \big].\nonumber
\end{align}
By recalling that $\rr(\bar{\pi}) = \bar{\pi}(\ra|s)/\pi_{\text{old}}(\ra|s)$, we use importance sampling \citep{sutton2018reinforcement}, and the trick of ``adding and subtracting'', to rewrite $L^{\text{clip}}$ as
\begin{align}
\vspace{-5pt}
    &\E_{\ra\sim\bar{\pi}}\Big[ A_{\pi_{\text{old}}}(s, \ra) \Big] - \E_{\ra\sim \pi_{\text{old}}}\big[ \rr(\bar{\pi}) A_{\pi_{\text{old}}}(s, \ra) \nonumber\\
    &- \min\big(\rr(\bar{\pi})A_{\pi_{\text{old}}}(s, \ra), \text{clip}\big( \rr(\bar{\pi}), 1\pm\epsilon\big)A_{\pi_{\text{old}}}(s, \ra)\big) \big].\nonumber
    \vspace{-5pt}
\end{align}
Going forward, we focus on the expectation that is subtracted. First, we can replace the $\min$ operator with $\max$, with the identity -$\min f(x) = \max \big[ -f(x) \big]$, as follows
\begin{align}
\vspace{-5pt}
    &\E_{\ra\sim \pi_{\text{old}}}\big[ \rr(\bar{\pi}) A_{\pi_{\text{old}}}(s, \ra) \nonumber\\
    &+ \max\big(-\rr(\bar{\pi})A_{\pi_{\text{old}}}(s, \ra), -\text{clip}\big( \rr(\bar{\pi}), 1\pm\epsilon\big)A_{\pi_{\text{old}}}(s, \ra)\big) \big].\nonumber
\vspace{-5pt}
\end{align}
Then, we move $\rr(\bar{\pi})A_{\pi_{\text{old}}}(s, \ra)$ inside the $\max$, and obtain
\begin{align}
\vspace{-5pt}
    &\E_{\ra\sim \pi_{\text{old}}}\big[ \max\big(0, \big[\rr(\bar{\pi})-\text{clip}\big( \rr(\bar{\pi}), 1\pm\epsilon\big)\big]A_{\pi_{\text{old}}}(s, \ra)\big) \big],\nonumber
\vspace{-5pt}
\end{align}
which can be simplified as
\begin{align}
    \label{eq:ppo-drift}
    \hspace{-6pt}
    \E_{\ra\sim \pi_{\text{old}}}\Big[ \text{ReLU}\Big( \big[\rr(\bar{\pi})-\text{clip}\big( \rr(\bar{\pi}), 1\hspace{-1pt}\pm\hspace{-1pt}\epsilon\big)\big]A_{\pi_{\text{old}}}(s, \ra)\Big) \Big] \
    \hspace{-4pt}
\end{align}
Notice now that this expression is always non-negative, due to the presence of the $\text{ReLU}$ function \citep{fukushima1982neocognitron}. Furthermore, for $\bar{\pi}$ sufficently close to $\pi_{\text{old}}$, \emph{i.e.,} so that $\rr(\bar{\pi})\in[1-\epsilon, 1+\epsilon]$, the clip operator reduces to the identity function, and so Equation (\ref{eq:ppo-drift}) is constant and zero. Hence, it is zero at $\bar{\pi}=\pi_{\text{old}}$, and has a zero gradient. Therefore, the expression in Equation (\ref{eq:ppo-drift}) is a drift functional of $\bar{\pi}(\cdot|s)$. This, together with taking $\beta_{\pi_{\text{old}}} \equiv \nu_{\pi_{\text{old}}}^{\bar{\pi}} \equiv \bar{\rho}_{\pi_{\text{old}}}$, and the trivial neighbourhood operator $\mathcal{N}\equiv \Pi$, shows that PPO, while it was supposed to be a heuristic approximation of trust-region learning, is by itself a rigorous instance of mirror learning. Thus, it inherits the monotonic improvement and convergence properties, which helps explain its great performance.

Interestingly, during the developmnent of PPO, a variant more closely related to TRL  was considered \citep{ppo}. Known as PPO-KL, the algorithm updates the policy to $\bar{\pi}=\pi_{\text{new}}$ to maximise
\begin{align}
\vspace{-5pt}
    \E_{\rs\sim\rho_{\pi_{\text{old}}}, \ra\sim\bar{\pi}}\big[ A_{\pi_{\text{old}}}(\rs, \ra)\big]- \tau\overline{\text{D}}_{\text{KL}}(\pi_{\text{old}}, \bar{\pi}),\nonumber
\vspace{-5pt}
\end{align}
and then scales $\tau$ up or down by a constant (typically $1.5$), depending on whether the KL-divergence induced by the update exceeded or subceeded some target level. Intriguingly, according to the authors this approach, although more closely-related to TRL, failed in practice. Mirror learning explains this apparent paradox. Namely, the rescaling scheme of $\tau$ causes discontinuity of the penalty term, when viewed as a function of $\pi_{\text{old}}$ and $\bar{\pi}$. This prevents the penalty from being a valid drift, so unlike PPO this version, PPO-KL, is not an instance of mirror learning. Recently, \citet{hsu2020revisiting} introduced PPO-KL with a fixed $\tau$, allowing for continuity of the KL-penalty. Such an algorithm is an instance of mirror learning, independently of whether the \textit{forward} or \textit{backward} KL-divergence is employed.
And indeed, the authors find that both versions of the algorithms result in strong empirical performance, further validating our theory.
\section{Related Work}
The RL community has long worked on the development of theory to aid the development of theoretically-sound techniques. Perhaps, one of the greatest achievements along this thread is the development of trust-region learning \citep[TRL]{trpo}---a framework that allows for stable training of monotonically improving policies. Unfortunately, despite its great theoeretical guarantees, TRL is intractable in most practical settings. Hence, the RL community focused on the development of heuristics, like TRPO \citep{trpo} and PPO \citep{ppo}, that approximate it while trading off theoretical guarantees of TRL for practicality. As these methods established new state-of-the-art performance on a variety of tasks \citep{duan2016benchmarking, berner2019dota}, the conceptual connection to TRL has been considered the key to success for many algorithms \citep{arjona2018rudder, queeney2021generalized}. However, recent works have shown that PPO is prone to breaking the core TRL principles of constraining the update size \citep{wang2020truly, engstrom2020implementation}, and thus revealed an existing chasm between RL practice and understanding---a problem that mirror learning resolves.

Trust-region algorithms, although original, inspire connections between RL and optimisation, where the method of \textit{mirror descent }has been studied \citep{nemirovskij1983problem}. This idea has recently been extended upon in \cite{tomar2020mirror}, where \textit{Mirror Descent Policy Optimization} (MDPO) was proposed---the algorithm optimises the policy with mirror ascent, where the role of the Bregman distance is played by the KL-divergence. The method has been shown to achieve great empirical performance (which is also implied by mirror learning). Notably, a new stream of research in regularised RL is arising, where the regulariser is subsumed into the Bregman distance. For example, \citet{neu2017unified} have shown that, in the average-reward setting, a variant of TRPO is an instance of mirror descent in a regularised MDP, and converges to the optimal policy. \citet{shani2020adaptive} have generalised TRPO to handle regularised problems, and derived convergence rates for their methods in the tabular case. \citet{lan2021policy} and \citet{zhan2021policy} have proposed mirror-descent algorithms that solve regularised MDPs with convex regularisers, and also provided their convergence properties. Here, we would like to highlight that mirror learning is not a method of solving regularised problems through mirror descent. Instead, it is a very general class of algorithms that solve the classical MDP. The term \textit{mirror}, however, is inspired by the intuition behind mirror descent, which solves the image of the original problem under the \textit{mirror map} \citep{nemirovskij1983problem}---similarly, we defined the mirror operator. 

Mirror learning is also related to \textit{Probability Functional Descent} \citep[PFD]{chu2019probability}. Although PFD is a general framework of probability distribution optimisation in machine learning problems, in the case of RL, it is an instance of mirror learning---the one recovering GPI. Lastly, the concepts of mirror descent and functional policy representation are connected in a concurrent work of \citet{vaswani2021functional}, who show how the technique of mirror descent implements the functional descent of parameterised policies. This approach, although powerful, fails to capture some algorithms, including PPO, which we prove to be an instance of mirror learning. The key to the generalisation power of our theory is its abstractness, as well as simplicity.
\begin{figure}
    \centering
    \includegraphics[width=60mm]{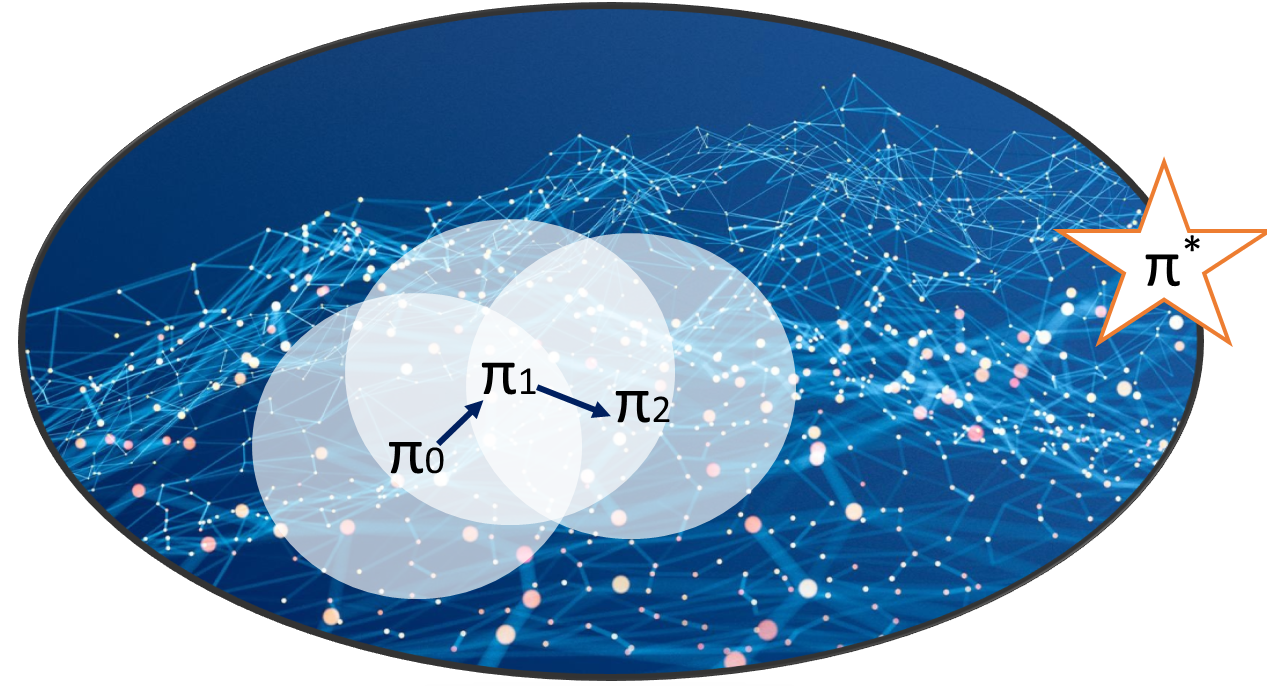}
    \caption{An intuitive view on the policy DAG and initial steps of mirror learning. A policy vertex has a neighbour, within its neighbourhood, which improves the return. }
    \label{fig:policy-dag}
    \vspace{-16pt}
\end{figure}
\vspace{-15pt}
\section{Graph-theoretical Interpretation}
In this section we use mirror learning to make a surprising connection between RL and graph theory. We begin by introducing the following definition of a particular \textit{directed acyclic graph} (DAG).
\begin{restatable}[Policy DAG]{definition}{policydag}
Let $\mathfrak{D}^{\nu}$ be a positive drift, and $\mathcal{N}$ be a neighbourhood operator. Then, the \textit{policy DAG} $\mathcal{G}(\Pi, \mathfrak{D}^{\nu}, \mathcal{N})$ is a graph where 
\begin{itemize}
    \vspace{-5pt}
    \item The vertex set is the policy space $\Pi$,
    \item $(\pi_1, \pi_2)$ is an edge if $\eta(\pi_1) < \eta(\pi_2)$ and $\pi_2\in\mathcal{N}(\pi_1)$,
    \item The weight of an edge $(\pi_1, \pi_2)$ is $\mathfrak{D}^{\nu}_{\pi_1}(\pi_2)$.
    \vspace{-5pt}
\end{itemize}
\end{restatable}
This graph is a valid DAG because the transitive, asymmetric ``$<$" relation that builds edges prevents the graph from having cycles. We also know that, for every non-optimal policy $\pi$, there in an outgoing edge $(\pi, \pi')$ from $\pi$, as $\pi'$ can be computed with a step of mirror learning (see Figure \ref{fig:policy-dag} for an intuitive picture).

The above definition allows us to cast mirror learning as a graph search problem. Namely, let $\pi_0 \in \Pi$ be a vertex policy at which we initialise the search, which further induces a sequence $(\pi_n)_{n=0}^{\infty}$. Let us define $\mathcal{U}_{\beta}=\min_{\pi\in\Pi, s\in\mathcal{S}}d(s)/\beta(s)$. Theorem \ref{theorem:fundamental} lets us upper-bound the weight of any traversed edge as
\begin{align}
    \vspace{-5pt}
    \label{ineq:strict-improvement}
    \eta(\pi_{n+1}) - \eta(\pi_n) \geq \mathcal{U}_{\beta}\cdot \mathfrak{D}^{\nu}_{\pi_n}(\pi_{n+1}).
    \vspace{-5pt}
\end{align}
Notice that $\eta(\pi_{n}) - \eta(\pi_0) = \sum_{i=1}^{n}\big[\eta(\pi_i) - \eta(\pi_{i-1})\big]$ converges to $\eta^* - \eta(\pi_0)$. Hence, the following series expansion 
\vspace{-12pt}
\begin{align}
\vspace{-5pt}
    \eta^* - \eta(\pi_0) = \sum\limits_{n=0}^{\infty}\big[ \eta(\pi_{n+1}) - \eta(\pi_n) \big]\nonumber
\vspace{-5pt}
\end{align}
is valid.
Combining it with Inequality (\ref{ineq:strict-improvement}), we obtain a bound on the total weight of the path induced by the search
\begin{align}   
\vspace{-5pt}
    \label{ineq:expected-mirror-bound}
    \frac{\eta^* - \eta(\pi_0)}{\mathcal{U}_{\beta}} \geq \sum\limits_{n=0}^{\infty}\mathfrak{D}^{\nu}_{\pi_n}(\pi_{n+1}),
\vspace{-5pt}
\end{align}
Hence, mirror learning finds a path from $\pi_0$ to the set of graph sinks (the optimal policies), whose total weight is finite, and does not depend on the policies that appeared during the search. Note that one could also estimate the left-hand side from above by a bound $(\eta^* +V_{\text{max}})/\mathcal{U}_{\beta}$ which is uniform for all initial policices $\pi_0$. This finding may be considered counterintuitive: While we can be decreasing the number of edges in the graph by shrinking the neighbourhood operator $\mathcal{N}(\pi)$, a path to $\pi^*$ still exists, and despite (perhaps) containing more edges, its total weight remains bounded.

Lastly, we believe that this inequality can be exploited for practical purposes: the drift functional $\mathfrak{D}$ is an abstract hyper-parameter, and the choice of it is a part of algorithm design. Practicioners can choose $\mathfrak{D}$ to describe a cost that they want to limit throughout training. For example, one can set $\mathfrak{D} = \texttt{risk}$, where $\texttt{risk}_{\pi}(\bar{\pi}|s)$ quantifies some notion of risk of updating $\pi(\cdot|s)$ to $\bar{\pi}(\cdot|s)$, or $\mathfrak{D}=\texttt{memory}$, to only make updates at a reasonable memory expense. Inequality (\ref{ineq:expected-mirror-bound}) guarantees that the total expense will remain finite, and provides an upper bound for it. Thereby, we encourage employing mirror learning with drifts designed to satisfy constraints of interest.
\vspace{-5pt}
\section{Numerical Experiments}
We verify the correctness of our theory with numerical experiments. Their purpose is not to establish a new state-of-the-art performance in the most challenging deep RL benchmarks. It is, instead, to demonstrate that algorithms that fall into the mirror learning framework obey Theorem \ref{theorem:fundamental} and Inequality (\ref{ineq:expected-mirror-bound}). Hence, to enable a close connection between the theory and experiments we choose simple environments and for drift functionals we selected: KL-divergence, squared L2 distance, squared total variation distance, and the trivial (zero) drift. For the neighbourhood operator, in one suite of experiments, we use the expected drift ball, \emph{i.e.}, the set of policies within some distance from the old policy, measured by the corresponding drift; in the second suite, we use the KL ball neighbourhood, like in TRPO. We represent the policies with $n$-dimensional action spaces as \textit{softmax} distributions, with $n-1$ parameters. For an exact verification of the theoretical results, we test each algorithm over \textit{only one} random seed. In all experiments, we set the initial-state and sampling distributions to uniform. The code is available at \url{https://github.com/znowu/mirror-learning}.

\textbf{Single-step Game.} In this game, the agent chooses one of $5$ actions, and receives a reward corresponding to it. These rewards are $10, 0, 1, 0, 5$, respectively for each action. The optimal return in this game equals $10$.

\textbf{Tabular Game.} This game has $5$ states, lined up next to each other. At each state, an agent can choose to go left, and receive a reward of $+0.1$, stay at the current state and receive $0$ reward, or go right and receive $-0.1$ reward. However, if the agent goes left at the left-most state, it receives $-10$ reward, and if it goes right at the right-most state, it recieves $+10$ reward. In these two cases, the game terminates. We set $\gamma=0.999$. Clearly, the optimal policy here ignores the small intermediate rewards and always chooses to go right. The corresponding optimal expected return is approximately $9.7$.

\textbf{GridWorld.} We consider a $5\times 5$ grid, with a barrier that limits the agent's movement. The goal of the agent is to reach the top-right corner as quick as possible, which terminates the episode, and to avoid the bottom-left corner with a bomb. It receives $-1$ reward for every step taken, and $-100$ for stepping onto the bomb and terminating the game.
For $\gamma=0.999$, the optimal expected return is approximately $-7.$
\begin{figure}[!ttbh]
    \centering
    \begin{subfigure}{0.49\linewidth}
        \centering
        \includegraphics[width=1.65in]{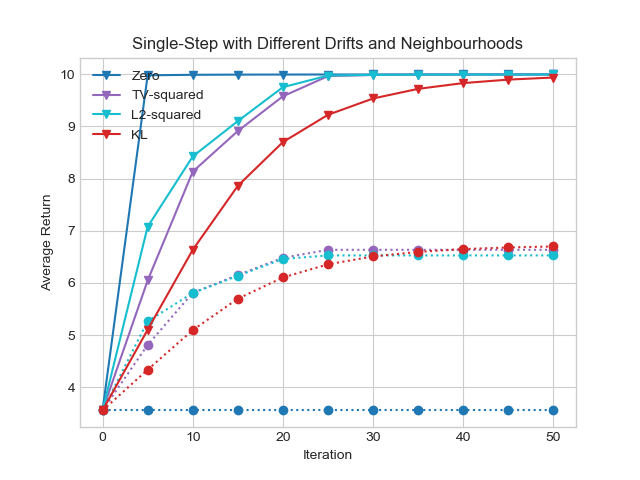}
    \end{subfigure}
    \begin{subfigure}{0.49\linewidth}
        \centering
        \includegraphics[width=1.65in]{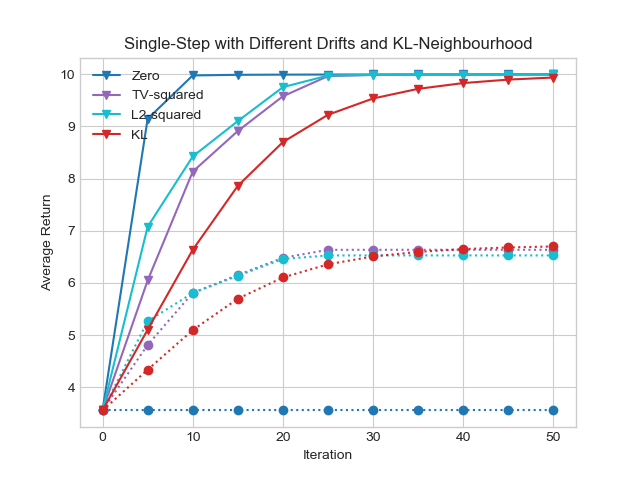} 
    \end{subfigure}
    \quad
    \begin{subfigure}{0.49\linewidth}
        \centering
        \includegraphics[width=1.65in]{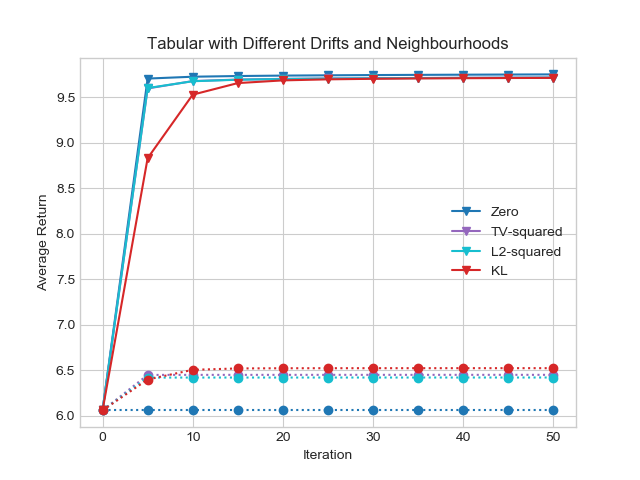}
    \end{subfigure}
    \begin{subfigure}{0.49\linewidth}
        \centering
        \includegraphics[width=1.65in]{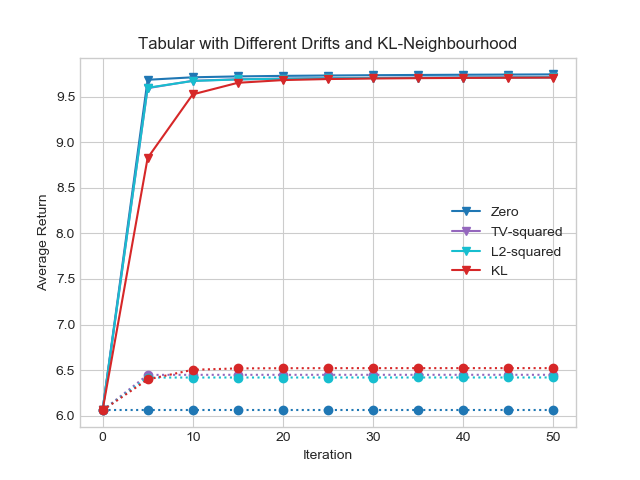}
    \end{subfigure}
    \quad
    \begin{subfigure}{0.49\linewidth}
        \centering
        \includegraphics[width=1.65in]{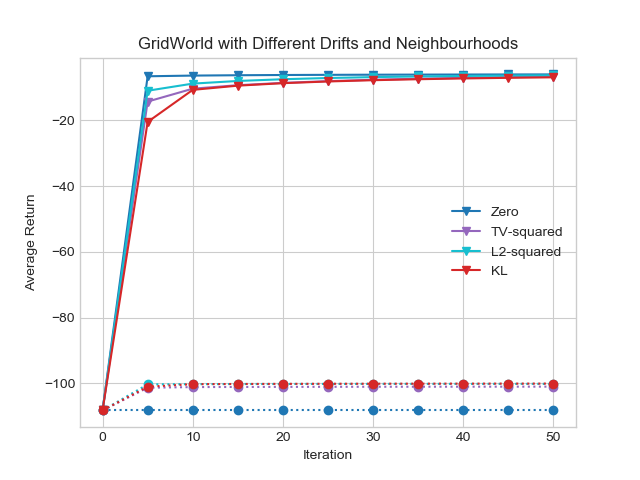}
    \end{subfigure}
    \begin{subfigure}{0.49\linewidth}
        \centering
        \includegraphics[width=1.65in]{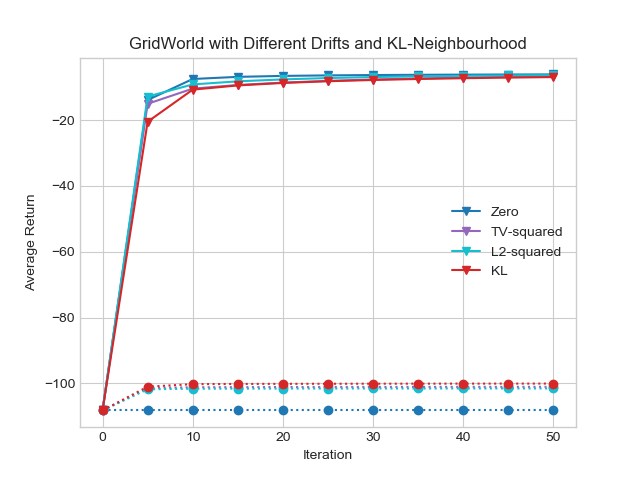}
    \end{subfigure}
    \caption{Mirror learning algorithms with different drifts and neighbourhood operators tested on simple environments. The solid lines represent the return, and the dotted ones represent the total drift. Algorithms in the left column use the drift ball neighbourhood, while those in the right use the KL ball. In both columns there are algorithms with each of the aforementioned drifts. Results are for one seed per environment and algorithm.}
    \label{fig:experiments}
    \vspace{-15pt}
\end{figure}

Figure \ref{fig:experiments} confirms that the resulting algorithms achieve the monotonic improvement property, and converges to the optimal returns---this confirms Theorem \ref{theorem:fundamental}. In these simple environments, the learning curves are influenced by the choice of the drift functional more than by the neighbourhood operator, although not significantly. An exception occurrs in the single-step game where, of course, one step of GPI is sufficient to solve the game. We also see that the value of the total drift (which we shift by $V_{\pi_{0}}$ on the plots for comparison) remains well below the return in all environments, confirming Inequality (\ref{ineq:expected-mirror-bound}). 

\vspace{-5pt}
\section{Conclusion}
In this paper, we introduced \textit{mirror learning}---a framework which unifies existing policy iteration algorithms. We have proved Theorem \ref{theorem:fundamental}, which states that any mirror learning algorithm solves the reinforcement learning problem. As a corollary to this theorem, we obtained convergence guarantees of state-of-the-art methods, including PPO and TRPO. More importantly, it provides a framework for the future development of theoretically-sound algorithms. We also proposed an interesting, graph-theoretical perspective on mirror learning, which establishes a connection between graph theory and RL. Lastly, we verifed the correctness of our theoretical results through numerical experiments on a diverse family of mirror learning instances in three simple toy settings. Designing and analysing the properties of the myriads of other possible mirror learning instances is an exciting avenue for future research. 

\section*{Acknowledgements}
I dedicate this work to my little brother \textit{Michał}, and thank him for persistently bringing light to my life. 
\newline
\textit{I have to go now. We won't see each other for a while, but one day we'll get together again. I promise.} \newline
\textit{Kuba}
\bibliography{example_paper.bib}
\bibliographystyle{icml2022}

\clearpage

\appendix
\onecolumn

\section{Definition Details}
\label{appendix:definitions}
\definedrift*

\textit{In this definition, the notion of the gradient of $\mathfrak{D}_{\pi}(\bar{\pi}|s)$ with respect to $\bar{\pi}(\cdot|s)\in\mathcal{P}(\mathcal{A})$ is rather intuitive. As the set $\mathcal{P}(\mathcal{A})$ is not a subset of $\mathbb{R}^n$, the gradient is not necessarily defined---however, we do not require its existence. Instead, as $\mathcal{P}(\mathcal{A})$ is a convex statistical manifold, we consider its G{\^ a}teaux derivatives \citep{gateaux1922diverses, hamilton1982global}. That is, for any $p\in\mathcal{P}(\mathcal{A})$, writing $v=p-\pi(\cdot|s)$, we consider the G{\^ a}teaux derivative, given as the limit
\begin{align}
    \delta \mathfrak{D}_{\pi}(\bar{\pi}|s) [v, \pi(\cdot|s)] \triangleq \lim_{\epsilon \rightarrow 0}\frac{\mathfrak{D}_{\pi}(\pi+\epsilon\cdot v|s)-\mathfrak{D}_{\pi}(\pi|s)}{\epsilon}
\end{align} 
and require them to be zero at $\bar{\pi}(\cdot|s)=\pi(\cdot|s)$. That is, we require that $\delta \mathfrak{D}_{\pi}(\bar{\pi}|s) [v, \pi(\cdot|s)]=0$.}

\bigskip

\neighbourhood*
\textit{We specify that a metric $\chi:\Pi\times\Pi\rightarrow\mathbb{R}$ is a metric between elements of a product of statistical manifolds, \emph{i.e.,} $\Pi = \bigtimes_{s\in\mathcal{S}}\mathcal{P}(\mathcal{A})$. Therefore, we carefully require that it is a non-negative, monotonically non-decreasing function of individual statistical divergence metrics $\chi_{s}\big(\pi(\cdot|s), \bar{\pi}(\cdot|s)\big)$, $\forall s\in\mathcal{S}$, and that $\chi\big(\pi, \bar{\pi}(\cdot|s)\big)=0$ only if $\chi_{s}\big(\pi(\cdot|s), \bar{\pi}(\cdot|s)\big)=0,\forall s\in\mathcal{S}$. 
}
\clearpage

\section{Proofs of Lemmas}
\label{appendix:lemmas}
\mpi*

\begin{proof}
Let us, for brevity, write $V_{\text{new}} = V_{\pi_{\text{new}}}$,  $V_{\text{old}} = V_{\pi_{\text{old}}}$,  $\beta=\beta_{\pi_{\text{old}}}$, $\nu=\nu_{\pi_{\text{old}}}^{\pi_{\text{new}}}$, and $\E_{\pi}[\cdot] = \E_{\ra\sim\pi, \rs'\sim P}[\cdot]$. For every state $s\in\mathcal{S}$, we have
\begin{align}
    &V_{\text{new}}(s) - V_{\text{old}}(s) \nonumber\\
    &= \E_{\pi_{\text{new}}}\big[ r(s, \ra) + \gamma V_{\text{new}}(\rs')\big]
    - \E_{\pi_{\text{old}}}\big[ r(s, \ra) + \gamma V_{\text{old}}(\rs')\big] \nonumber\\
    &= \E_{\pi_{\text{new}}}\big[ r(s, \ra) + \gamma V_{\text{old}}(\rs')\big] - \frac{\nu(s)}{\beta(s)}\mathfrak{D}_{\pi_{\text{old}}}(\pi_{\text{new}}|s) \nonumber\\
    &\quad - \E_{\pi_{\text{old}}}\big[ r(s, \ra) + \gamma V_{\text{old}}(\rs')\big]\nonumber\\
    &\quad + \gamma\E_{\pi_{\text{new}}}\big[
    V_{\text{new}}(\rs') - V_{\text{old}}(\rs')
    \big] + \frac{\nu(s)}{\beta(s)}\mathfrak{D}_{\pi_{\text{old}}}(\pi_{\text{new}}|s) \nonumber\\
    &=\E_{\pi_{\text{new}}}\big[ Q_{\pi_{\text{old}}}(s, \ra) \big] - \frac{\nu(s)}{\beta(s)}\mathfrak{D}_{\pi_{\text{old}}}(\pi_{\text{new}}|s)\nonumber\\
    &\quad- \E_{\pi_{\text{old}}}\big[ Q_{\pi_{\text{old}}}(s, \ra) \big] \nonumber\\
    &\quad +  \gamma\E_{\pi_{\text{new}}}\big[
    V_{\text{new}}(\rs') - V_{\text{old}}(\rs') \big] + \frac{\nu(s)}{\beta(s)}\mathfrak{D}_{\pi_{\text{old}}}(\pi_{\text{new}}|s)\nonumber\\
    &= \big[ \mathcal{M}^{\pi_{\text{new}}}_{\mathfrak{D}}V_{\text{old}}\big](s)
    - \big[ \mathcal{M}^{\pi_{\text{old}}}_{\mathfrak{D}}V_{\text{old}}\big](s)
    \nonumber\\
    &\quad + \gamma\E_{\pi_{\text{new}}}\big[
    V_{\text{new}}(\rs') - V_{\text{old}}(\rs')
    \big] + \frac{\nu(s)}{\beta(s)}\mathfrak{D}_{\pi_{\text{old}}}(\pi_{\text{new}}|s) \nonumber\\
    &\geq \gamma\E_{\pi_{\text{new}}}\big[
    V_{\text{new}}(\rs') - V_{\text{old}}(\rs')
    \big] + \frac{\nu(s)}{\beta(s)}\mathfrak{D}_{\pi_{\text{old}}}(\pi_{\text{new}}|s), \nonumber\\
    &\text{by Inequality (\ref{ineq:what-mirror-step-does}). Taking infimum within the expectation,}\nonumber\\
    &V_{\text{new}}(s) - V_{\text{old}}(s) \nonumber\\
    &\geq \gamma \cdot \inf_{s'}\big[  V_{\text{new}}(\rs') - V_{\text{old}}(\rs')
    \big] + \frac{\nu(s)}{\beta(s)}\mathfrak{D}_{\pi_{\text{old}}}(\pi_{\text{new}}|s).\nonumber\\
    &\text{Taking infimum over }s\nonumber\\
    &\inf_{s}\big[  V_{\text{new}}(s) - V_{\text{old}}(s)
    \big] \nonumber\\
    &\geq \gamma\inf_{s'}\big[  V_{\text{new}}(s') - V_{\text{old}}(s')
    \big] + \inf_{s}\frac{\nu(s)}{\beta(s)}\mathfrak{D}_{\pi_{\text{old}}}(\pi_{\text{new}}|s).\nonumber\\
    &\text{We conclude that}\nonumber\\
    &\inf_{s}\big[  V_{\text{new}}(s) - V_{\text{old}}(s)
    \big] \geq \inf_{s}\frac{\nu(s)}{\beta(s)}\mathfrak{D}_{\pi_{\text{old}}}(\pi_{\text{new}}|s)/(1-\gamma),\nonumber\\
    &\text{which is non-negative, and concludes the proof.} \nonumber
\end{align}
\end{proof}

\mirrormajor*
\begin{proof}
We will prove the statement by contradiction. Suppose that 
\begin{align}
    \label{equation:suppose-max}
    \pi_{\text{new}} = \argmax_{\bar{\pi}\in\mathcal{N}(\pi_{\text{old}})} \E_{\rs\sim\beta_{\pi_{\text{old}}}}\Big[ \big[ \mathcal{M}^{\bar{\pi}}_{\mathfrak{D}}V_{\pi_{\text{old}}}\big](\rs) \Big],
\end{align}
and that there exists a state $s_0$, such that 
\begin{align}
    \label{ineq:state}
    \big[ \mathcal{M}_{\mathfrak{D}}^{\pi_{\text{new}}}V_{\pi_{\text{old}}} \big](s_0) < 
    \big[ \mathcal{M}_{\mathfrak{D}}^{\pi_{\text{old}}}V_{\pi_{\text{old}}} \big](s_0). 
\end{align}
Let us define a policy $\hat{\pi}$, so that $\hat{\pi}(\cdot|s_0) = \pi_{\text{old}}(\cdot|s_0)$, and $\hat{\pi}(\cdot|s) = \pi_{\text{new}}(\cdot|s)$ for $s\neq s_0$. As the distance between of $\hat{\pi}$ from $\pi_{\text{old}}$ is the same as of $\pi_{\text{new}}$ at $s\neq s_0$, and possibly smaller at $s_0$, we have that $\bar{\pi}$ is not further from $\pi_{\text{old}}$ than $\pi_{\text{new}}$. Hence, $\hat{\pi} \in \mathcal{N}(\pi_{\text{old}})$. Furthermore, Inequality (\ref{ineq:state}) implies
\begin{align}
    \big[ \mathcal{M}_{\mathfrak{D}}^{\hat{\pi}}V_{\pi_{\text{old}}} \big](s_0) >
    \big[ \mathcal{M}_{\mathfrak{D}}^{\pi_{\text{new}}}V_{\pi_{\text{old}}} \big](s_0).
    \nonumber
\end{align}
Hence, 
\begin{align}
    &\E_{\rs\sim\beta_{\pi_{\text{old}}}}\Big[ \big[ \mathcal{M}^{\hat{\pi}}_{\mathfrak{D}}V_{\pi_{\text{old}}}\big](s) \Big] 
    -
    \E_{\rs\sim\beta_{\pi_{\text{old}}}}\Big[ \big[ \mathcal{M}^{\pi_{\text{new}}}_{\mathfrak{D}}V_{\pi_{\text{old}}}\big](s) \Big]\nonumber\\
    &=\big(\E_{\rs\sim\beta_{\pi_{\text{old}}}, \ra\sim \hat{\pi}}\Big[ Q_{\pi_{\text{old}}}(\rs, \ra) \Big] 
    -\mathfrak{D}^{v}_{\pi_{\text{old}}}(\hat{\pi})\big)
    - \big(\E_{\rs\sim\beta_{\pi_{\text{old}}}, \ra\sim \pi_{\text{new}}}\Big[ Q_{\pi_{\text{old}}}(\rs, \ra) \Big] 
    -\mathfrak{D}^{v}_{\pi_{\text{old}}}(\pi_{\text{new}})\big) \nonumber\\
    &\geq\big(\E_{\rs\sim\beta_{\pi_{\text{old}}}, \ra\sim \hat{\pi}}\Big[ Q_{\pi_{\text{old}}}(\rs, \ra) \Big] 
    -\mathfrak{D}^{v}_{\pi_{\text{old}}}(\pi_{\text{new}})\big)
    - \big(\E_{\rs\sim\beta_{\pi_{\text{old}}}, \ra\sim \pi_{\text{new}}}\Big[ Q_{\pi_{\text{old}}}(\rs, \ra) \Big] 
    -\mathfrak{D}^{v}_{\pi_{\text{old}}}(\pi_{\text{new}})\big) \nonumber\\
    &\text{because of monotonic non-decreasing of the drift in TV-distance}\nonumber\\
    &=\E_{\rs\sim\beta_{\pi_{\text{old}}}, \ra\sim \hat{\pi}}\Big[ Q_{\pi_{\text{old}}}(\rs, \ra) \Big] 
    - \E_{\rs\sim\beta_{\pi_{\text{old}}}, \ra\sim \pi_{\text{new}}}\Big[ Q_{\pi_{\text{old}}}(\rs, \ra) \Big] 
    \nonumber\\
    & = \beta_{\text{old}}(s_0) \Big( \big[ \mathcal{M}_{\mathfrak{D}}^{\hat{\pi}}V_{\pi_{\text{old}}} \big](s_0) -
    \big[ \mathcal{M}_{\mathfrak{D}}^{\pi_{\text{new}}}V_{\pi_{\text{old}}} \big](s_0) \Big) > 0.\nonumber
\end{align}
This is a contradiction with Equation (\ref{equation:suppose-max}), which finishes the proof. 
\end{proof}

\clearpage

\section{The Proof of Theorem \ref{theorem:fundamental}}
\label{appendix:theorem}

\mirrorfundamental*
\begin{proof}
We split the proof of the whole theorem into two parts, each of which proves different groups of properties stated in the theorem.
\newline
\textbf{Strict monotonic improvement  (Property \ref{property1})}
\newline 
By Lemma \ref{lemma:mpi}, we have that $\forall n\in\mathbb{N}, s\in\mathcal{S}$,
\begin{align}
    \label{ineq:weak-majorisation}
    V_{\pi_{n+1}}(s) \geq V_{\pi_n}(s).
    \vspace{-5pt}
\end{align}
Let us write $\beta=\beta_{\pi_n}$ and $\nu=\nu_{\pi_n}^{\pi_{n+1}}$. With this inequality, as well as Lemma \ref{lemma:mirror-major}, we have
\begin{align}
    &V_{\pi_{n+1}}(s) - V_{\pi_n}(s) \nonumber\\
    &= \E_{\ra\sim\pi_{n+1}, \rs' \sim P}\big[ r(s, \ra) + \gamma V_{\pi_{n+1}}(\rs') \big]\nonumber\\
    &\quad \quad \quad - \E_{\ra\sim\pi_{n}, \rs' \sim P}\big[ r(s, \ra) + \gamma V_{\pi_{n}}(\rs') \big]\nonumber\\
    &= \frac{\nu(s)}{\beta(s)}\mathfrak{D}_{\pi_n}(\pi_{n+1}|s) + \E_{\ra\sim\pi_{n+1}, \rs' \sim P}\big[ r(s, \ra) + \gamma V_{\pi_{n+1}}(\rs') \big]\nonumber\\
    &\quad- \frac{\nu(s)}{\beta(s)}\mathfrak{D}_{\pi_n}(\pi_{n+1}|s) 
    - \E_{\ra\sim\pi_{n}, \rs' \sim P}\big[ r(s, \ra) + \gamma V_{\pi_{n}}(\rs') \big].\nonumber\\
    &\text{Now, by Inequality (\ref{ineq:weak-majorisation})},\nonumber\\
    &\geq \frac{\nu(s)}{\beta(s)}\mathfrak{D}_{\pi_n}(\pi_{n+1}|s) + \E_{\ra\sim\pi_{n+1}}\big[ r(s, \ra) + \gamma V_{\pi_{n}}(\rs') \big] \nonumber\\
    &\quad - \frac{\nu(s)}{\beta(s)}\mathfrak{D}_{\pi_n}(\pi_{n+1}|s) - \E_{\ra\sim\pi_{n}}\big[ r(s, \ra) + \gamma V_{\pi_{n}}(\rs') \big]\nonumber\\
    &=\frac{\nu(s)}{\beta(s)}\mathfrak{D}_{\pi_n}(\pi_{n+1}|s) + \E_{\ra\sim\pi_{n+1}}\big[ Q_{\pi_{n}}(s, \ra) \big] \nonumber\\
    &\quad - \frac{\nu(s)}{\beta(s)}\mathfrak{D}_{\pi_n}(\pi_{n+1}|s) - \E_{\ra\sim\pi_{n}}\big[ Q_{\pi_{n}}(s, \ra) \big]\nonumber\\
    &= \frac{\nu(s)}{\beta(s)}\mathfrak{D}_{\pi_n}(\pi_{n+1}|s) + \big[ \mathcal{M}_{\mathfrak{D}}^{\pi_{n+1}}V_{\pi_n} \big](s) - \big[ \mathcal{M}_{\mathfrak{D}}^{\pi_{n}}V_{\pi_n} \big](s) \nonumber\\
    &\geq  \frac{\nu(s)}{\beta(s)}\mathfrak{D}_{\pi_n}(\pi_{n+1}|s),\nonumber
\end{align}
where the last inequality holds by Lemma \ref{lemma:mirror-major}.
To summarise, we proved that
\begin{align}
    \label{ineq:strict-majorisation}
    V_{\pi_{n+1}}(s) - V_{\pi_n}(s) \geq \frac{\nu(s)}{\beta(s)}\mathfrak{D}_{\pi_n}(\pi_{n+1}|s).
\end{align}
Taking the expectation over $\rs\sim d$, we obtain
\begin{align}
    \eta(\pi_{n+1}) - \eta(\pi_n) \geq \E_{\rs\sim d}\Bigg[ \frac{\nu(\rs)}{\beta(\rs)}\mathfrak{D}_{\pi_n}(\pi_{n+1}|\rs)\Bigg],\nonumber
\end{align}
which finishes this part of the proof.

\textbf{Optimality (Properties \ref{property2}, \ref{property3}, \& \ref{property4})}
\newline
\textbf{Step 1 (convergence of value functions).}
\newline
We consider the sequence $(V_{\pi_n})_{n=0}^{\infty}$ of associated value functions. By Lemma \ref{lemma:mirror-major}, we know that $\forall n\in\mathbb{N}$, $V_{\pi_{n+1}}\geq V_{\pi_n}$ uniformly. However, as for every $n\in\mathbb{N}$ and $s\in\mathcal{S}$, the uniform bound $V_{\pi_n}(s)\leq V_{\max}$ holds, the sequence of value functions must converge. We denote its limit by $V$.

\textbf{Step 2 (characterisation of policy limit points). } 
\newline
Let $L\Pi$ be the set of all limit points of $(\pi_n)_{n=0}^{\infty}$. As the sequence $(\pi_n)_{n=0}^{\infty}$ is bounded, Bolzano-Weierstrass guarantees that there exists $\bar{\pi}\in L\Pi$, and a subsequence, say $(\pi_{n_i})_{i=1}^{\infty}$, which converges to it. Writing $\beta_{k} = \beta_{\pi_k}$, we consider the optimisation problem that mirror learning induces at every element $\pi_{n_i}$ of this subsequence,
\begin{align}
    \label{eq:optimisation-of-subseq}
    \max_{\pi\in\mathcal{N}(\pi_{n_i})}\E_{\rs\sim\beta_{n_i}}\Big[ \big[\mathcal{M}^{\pi}_{\mathfrak{D}}V_{\pi_{n_i}} \big](\rs) \Big] 
\end{align}
Note that by the continuity of the value function \citep[Appendix A]{kuba2021trust}, as well as the drift, the neighbourhood operator, and the sampling distribution, we obtain that the above expression is continuous in $\pi_{n_i}$. Hence, by Berge's Maximum Theorem \citep{ausubel1993generalized},
writing $\bar{\beta} = \beta_{\bar{\pi}}$, as $i\rightarrow\infty$, the above expression converges to the following, 
\begin{align}  
    \label{eq:limit-of-max}
    \max_{\pi\in\mathcal{N}(\bar{\pi})}\E_{\rs\sim\bar{\beta}}\Big[ \big[\mathcal{M}^{\pi}_{\mathfrak{D}}V_{\bar{\pi}} \big](\rs) \Big].
\end{align}
\textit{Note that $V_{\bar{\pi}} = V$, as the sequence of value functions converges (has a unique limit point $V$).}
Furthermore, for all $i\in\mathbb{N}$, $\pi_{n_i + 1}$ is the argmax (precisely, is an element of the upper hemicontinuous argmax correspondence) of Equation (\ref{eq:optimisation-of-subseq}). Hence, there exists a subsequence $(\pi_{n_{i_k}+1})$ of $(\pi_{n_i+1})_{i\in\mathbb{N}}$ that converges to a policy $\pi'$, which is a solution to Equation (\ref{eq:limit-of-max}).
\newline
\textbf{Claim}: \textit{The solution to Equation} (\ref{eq:limit-of-max}) \textit{is} $\pi'=\bar{\pi}$. 
\newline We prove the above claim by contradiction. Suppose that $\pi'\neq \bar{\pi}$. As $\pi'$ can be obtained from $\bar{\pi}$ via mirror learning, Inequality (\ref{ineq:weak-majorisation}) implies
\begin{align}
    \label{ineq:q-major}
    Q_{\pi'}(s, a) &= \E_{\rs'\sim P}\big[ r(s, a) + \gamma V_{\pi'}(\rs') \big] \nonumber\\
    &\geq \E_{\rs'\sim P}\big[ r(s, a) + \gamma V_{\bar{\pi}}(\rs') \big] = Q_{\bar{\pi}}(s, a). 
\end{align}
If we have 
\begin{align}
    \E_{\rs\sim \bar{\beta}}\Big[ \big[ \mathcal{M}^{\pi'}_{\mathfrak{D}}V_{\bar{\pi}}\big](\rs) \Big] > 
    \E_{\rs\sim \bar{\beta}}\Big[ \big[ \mathcal{M}^{\bar{\pi}}_{\mathfrak{D}}V_{\bar{\pi}}\big](\rs) \Big],\nonumber
\end{align}
then for some state $s$, 
\begin{align}
    \big[ \mathcal{M}^{\pi'}_{\mathfrak{D}}V_{\bar{\pi}}\big](s) > \big[ \mathcal{M}^{\bar{\pi}}_{\mathfrak{D}}V_{\bar{\pi}}\big](s),\nonumber
\end{align}
which can be written as 
\begin{align}
    &\E_{\ra\sim\pi'}\big[ Q_{\bar{\pi}}(s, \ra) \big] - \frac{\nu^{\pi'}_{\bar{\pi}}(s)}{\bar{\beta}(s)}\mathfrak{D}_{\bar{\pi}}(\pi'|s) \nonumber\\
    &> 
    \E_{\ra\sim\bar{\pi}}\big[ Q_{\bar{\pi}}(s, \ra) \big] - \frac{\nu_{\bar{\pi}}^{\bar{\pi}}(s)}{\bar{\beta}(s)}\mathfrak{D}_{\bar{\pi}}(\bar{\pi}|s) = V_{\bar{\pi}}(s) = V(s). \nonumber
\end{align}
Using Inequality (\ref{ineq:q-major}), and non-negativity of drifts,
\begin{align}
V_{\pi'}(s) &= \E_{\ra\sim\pi'}\big[ Q_{\pi'}(s, \ra) \big]\nonumber\\
&\geq \E_{\ra\sim\pi'}\big[ Q_{\bar{\pi}}(s, \ra) \big] - \frac{\nu(s)}{\bar{\beta}(s)}\mathfrak{D}_{\bar{\pi}}(\pi'|s)  > V(s).\nonumber
\end{align}
However, as $V_{\pi'}$ is the limit of $V_{\pi_{n_{i_k}+1}}$, we have $V_{\pi'}=V$ (uniqueness of the value limit) which yields a contradiction, proving the claim.
Hence, the limit point $\bar{\pi}$ of the sequence $(\pi_n)_{n=0}^{\infty}$ satisfies
\begin{align}
    \label{eq:pi-bar-optimal-drift}
    \bar{\pi} = \argmax_{\pi\in\mathcal{N}(\bar{\pi})}\E_{\rs\sim\bar{\beta}}\Big[ \big[\mathcal{M}^{\pi}_{\mathfrak{D}}V_{\bar{\pi}} \big](\rs) \Big]. 
\end{align}
\textbf{Step 3 (dropping the drift).}
\newline
Let $\bar{\pi}$ be a limit point of $(\pi_n)_{n=0}^{\infty}$. From Equation (\ref{eq:pi-bar-optimal-drift}), and the definition of the mirror operator, we know that
\begin{align}
    \label{eq:optimality-rewritten}
    \bar{\pi} = \argmax_{\pi\in\mathcal{N}(\bar{\pi})}\E_{\rs\sim\bar{\beta}, \ra\sim\pi}\big[ A_{\bar{\pi}}(\rs, \ra) \big] - \mathfrak{D}^{\nu}_{\bar{\pi}}(\pi).
\end{align}
Suppose that there exists a policy $\pi'$, and a state $s$, such that 
\begin{align}
    \label{ineq:strictly-better-pi}
    \E_{\ra\sim\pi'}\big[ A_{\bar{\pi}}(s, \ra) \big] > \E_{\ra\sim\bar{\pi}}\big[ A_{\bar{\pi}}(s, \ra) \big] = 0.
\end{align}
For any policy $\pi$, consider the canonical parametrisation $\pi(\cdot|s) = (x_1, \dots, x_{m-1}, 1-\sum_{i=1}^{m-1}x_i)$, where $m$ is the size of the action space. We have that
\begin{align}
    &\E_{\ra\sim\pi}\big[ A_{\bar{\pi}}(s, \ra) \big] = \sum_{i=1}^{m}\pi(a_i|s)A_{\bar{\pi}}(s, a_i)\nonumber\\
    &= \sum_{i=1}^{m-1}x_i A_{\bar{\pi}}(s, a_i) 
    + (1-\sum_{j=1}^{m-1}x_j) A_{\bar{\pi}}(s, a_m)\nonumber\\
    &= \sum_{i=1}^{m-1}x_i \big[ A_{\bar{\pi}}(s, a_i) - A_{\bar{\pi}}(s, a_m) \big]
    +A_{\bar{\pi}}(s, a_m).\nonumber
\end{align}
This means that $\E_{\ra\sim\pi}\big[ A_{\bar{\pi}}(s, \ra) \big]$ is an affine function of $\pi(\cdot|s)$, and thus, its G{\^ a}teaux derivatives are constant in $\mathcal{P}(\mathcal{A})$ for fixed directions. Together with Inequality (\ref{ineq:strictly-better-pi}), this implies that the G{\^ a}teaux derivative, in the direction from $\bar{\pi}$ to $\pi'$, is strictly positive. Furthermore, the G{\^a}teaux derivatives of $\frac{\nu^{\pi}_{\bar{\pi}}(s)}{\bar{\beta}(s)}\mathfrak{D}_{\bar{\pi}}(\pi|s)$ are zero at $\pi(\cdot|s)=\bar{\pi}(\cdot|s)$, as
\begin{align}
    &0 = \frac{\nu^{\bar{\pi}}_{\bar{\pi}}(s)}{\bar{\beta}(s)}\mathfrak{D}_{\bar{\pi}}(\bar{\pi}|s) = \frac{1}{\bar{\beta}(s)}\mathfrak{D}_{\bar{\pi}}(\bar{\pi}|s),\nonumber\\
    & \quad \quad \quad \quad \quad \quad \quad \text{and}\nonumber\\
    &0 \leq \frac{\nu_{\bar{\pi}}^{\pi}(s)}{\bar{\beta}(s)}\mathfrak{D}_{\bar{\pi}}(\pi|s) \leq \frac{1}{\bar{\beta}(s)}\mathfrak{D}_{\bar{\pi}}(\pi|s),\nonumber
\end{align}
and both the lower and upper of the above bounds have zero derivatives.
Hence, the G{\^ a}teaux derivative of $\E_{\ra\sim\pi}\big[ A_{\bar{\pi}}(s, \ra) \big] - \frac{\nu^{\pi}_{\bar{\pi}}(s)}{\bar{\beta}(s)}\mathfrak{D}_{\bar{\pi}}(\pi|s)$ is strictly positive. Therefore, for conditional policies $\hat{\pi}(\cdot|s)$ sufficiently close to $\bar{\pi}(\cdot|s)$ in the direction towards $\pi'(\cdot|s)$, we have (with a slight abuse of notation for $\nu$)
\begin{align}
    \label{ineq:pi-hat}
    \E_{\ra\sim\hat{\pi}}\big[ A_{\bar{\pi}}(s, \ra) \big] - \frac{\nu_{\bar{\pi}}^{\hat{\pi}}(s)}{\bar{\beta}(s)}\mathfrak{D}_{\bar{\pi}}(\hat{\pi}|s) > 0.
\end{align}
Let us construct a policy $\widetilde{\pi}$ as follows. For all states $y\neq s$, we set $\widetilde{\pi}(\cdot|y) = \bar{\pi}(\cdot|y)$. Moreover, for $\widetilde{\pi}(\cdot|s)$ we choose $\hat{\pi}(\cdot|s)$ as in Inequality (\ref{ineq:pi-hat}), sufficiently close to $\bar{\pi}(\cdot|s)$, so that $\widetilde{\pi}\in\mathcal{N}(\bar{\pi})$. Then, we have
\begin{align}
    &\E_{\rs\sim\bar{\beta}, \ra\sim\widetilde{\pi}}\big[ A_{\bar{\pi}}(\rs, \ra) \big] - \mathfrak{D}^{\nu}_{\bar{\pi}}(\widetilde{\pi})\nonumber\\
    &= \bar{\beta}(s)\big[ \E_{\ra\sim\hat{\pi}}\big[ A_{\bar{\pi}}(s, \ra) \big] - \frac{\nu_{\bar{\pi}}^{\hat{\pi}}(s)}{\beta_{\bar{\pi}}(s)}\mathfrak{D}_{\bar{\pi}}(\hat{\pi}|s)  \big] > 0.
\end{align}
The above contradicts Equation (\ref{eq:optimality-rewritten}). Hence, the assumption made in Inequality (\ref{ineq:strictly-better-pi}) was false. Thus, we have proved that, for every state $s$,
\begin{align}
    \label{eq:optimality}
    \bar{\pi} &= \argmax_{\pi}\E_{\ra\sim\pi}\big[ A_{\bar{\pi}}(s, \ra) \big]\nonumber\\
    &= \argmax_{\pi}\E_{\ra\sim\pi}\big[ Q_{\bar{\pi}}(s, \ra) \big].
\end{align}
\newline
\textbf{Step 4 (optimality).}
\newline
Equation (\ref{eq:optimality}) implies that $\bar{\pi}$ is an optimal policy \citep{sutton2018reinforcement}, and so the value function $V=V_{\bar{\pi}}$ is the optimal value function $V^*$ (Property \ref{property2}). Consequently, the expected return that the policies converge to, $\eta = \E_{\rs\sim d}\big[ V(\rs) \big] = \E_{\rs\sim d}\big[ V^*(\rs) \big]=\eta^*$ is optimal (Property \ref{property3}).  Lastly, as $\bar{\pi}$ was an arbitrary limit point, any element of the $\omega$-limit set is an optimal policy (Property \ref{property4}). This finishes the proof.
\end{proof}

\section{Extension to Continuous State and Action Spaces}
The results on Mirror Learning extend to continuous state and action spaces through general versions of our claims. These require a little more care in their formulation, but their validity holds as a corollary to our proofs.

In general, the state and the action spaces $\mathcal{S}$ and $\mathcal{A}$ must be assumed to be compact and measurable. For the state space, we introduce a reference probability measure $\mu_{\mathcal{S}}:\mathbb{P}(\mathcal{S})\rightarrow\mathbb{R}$ that is strictly positive, \emph{i.e}, $\mu_{\mathcal{S}}(s)>0, \forall s\in\mathcal{S}$. Under such setting, a policy $\pi^*$ is optimal if it satisfies the Bellman optimality equation 
\begin{align}
    \pi^*(\cdot|s) = \argmax\limits_{\pi(\cdot|s)\in\mathcal{P}(\mathcal{A})}\E_{\ra\sim p}\big[ Q_{\pi^*}(s, \ra)\big],\nonumber
\end{align}
at states that form a set of measure $1$ with respect to $\mu_{\mathcal{S}}$. In other words, a policy is optimal if it obeys the Bellman optimality equation almost surely.

As for the results, the inequality provided by Lemma \ref{lemma:mpi} (the state value function improvement) holds almost surely with respect to $\mu_{\mathcal{S}}$ as long as the policy $\pi_{\text{new}}$ satisfies Inequality (\ref{ineq:what-mirror-step-does}), also almost surely with respect to this measure. Of course, the corollary on the monotonic improvement remains valid. Next, the inequality introduced in Lemma \ref{lemma:mirror-major} must now be stated almost surely, again with respect to $\mu_{\mathcal{S}}$. Lastly, the entire statement of Theorem \ref{theorem:fundamental} remains unchanged---it has the same formulation for all compact .

\clearpage
\section{The Listing of RL Approaches as Instances of Mirror Learning}
\label{appendix:listing}
\subsection*{Generalised Policy Iteration \citep{sutton2018reinforcement}}
\vspace{-10pt}
\begin{align}
    \label{eq:gpi}
    \pi_{\text{new}}(\cdot|s) = \argmax_{\bar{\pi}(\cdot|s)\in\mathcal{P}(\mathcal{A})} \E_{\ra\sim\bar{\pi}}\big[ Q_{\pi_{\text{old}}}(s, \ra)\big], \ \forall s\in\mathcal{S}\nonumber
\end{align}
\begin{itemize}
    \item Drift functional: trivial $\mathfrak{D} \equiv 0$.
    \item Neighbourhood operator: trivial $\mathcal{N} \equiv \Pi$.
    \item Sampling distribution: arbitrary.
\end{itemize}

\subsection*{Trust-Region Learning \citep{trpo}}
\vspace{-10pt}
\begin{align}
    &\pi_{\text{new}} = \argmax_{\bar{\pi}\in\Pi}\E_{\rs\sim\rho_{\pi_{\text{old}}}, \ra\sim\bar{\pi}}\big[ A_{\pi_{\text{old}}}(\rs, \ra) \big] - C\text{D}_{\text{KL}}^{\text{max}}(\pi_{\text{old}}, \bar{\pi}),\nonumber\\
    &\quad \quad \quad \quad \text{where }C = \frac{4\gamma \max_{s, a}|A_{\pi_{\text{old}}}(s, a)|}{(1-\gamma)^2}\nonumber
\end{align}
\begin{itemize}
    \item Drift functional: 
    \begin{align}
        &\mathfrak{D}_{\pi}(\bar{\pi}|s) = (1-\gamma)C\text{D}_{\text{KL}}\big( \pi(\cdot|s), \bar{\pi}(\cdot|s) \big),\ 
        \text{ with }\nu_{\pi}^{\bar{\pi}}(s_{\max}) = 1, \text{ where } s_{\max}=\argmax_{s\in\mathcal{S}}\mathfrak{D}_{\pi}(\bar{\pi}|s). \nonumber 
    \end{align}
    \item Neighbourhood operator: trivial $\mathcal{N} \equiv \Pi$.
    \item Sampling distribution: $\beta_{\pi} = \bar{\rho}_{\pi}$, the normalised marginal discounted state distribution.
\end{itemize}

\subsection*{TRPO \citep{trpo}}
\vspace{-10pt}
\begin{align}
    &\quad \quad \pi_{\text{new}} = \argmax_{\bar{\pi}\in\Pi}\E_{\rs\sim\rho_{\pi_{\text{old}}}, \ra\sim\bar{\pi}}\big[ A_{\pi_{\text{old}}}(\rs, \ra) \big] \nonumber\\
    &\text{subject to }\E_{\rs\sim\rho_{\pi_{\text{old}}}}\big[ \text{D}_{\text{KL}}\big(\pi_{\text{old}}(\cdot|\rs), \pi_{\text{new}}(\cdot|\rs) \big) \big] \leq \delta.\nonumber 
\end{align}
\begin{itemize}
    \item Drift functional: trivial $\mathfrak{D} \equiv 0$.
    \item Neighbourhood operator:
    \begin{align}
        \mathcal{N}(\pi) = \{ \bar{\pi} \in \Pi : \E_{\rs\sim\rho_{\pi}}\big[ D_{\text{KL}}\big( \pi(\cdot|\rs), \bar{\pi}(\cdot|\rs) \big)\big] \leq \delta \big\}.\nonumber
    \end{align}
    \item Sampling distribution: $\beta_{\pi} = \bar{\rho}_{\pi}$.
\end{itemize}

\subsection*{PPO \citep{ppo}}
\vspace{-10pt}
\begin{align}
    &\quad \quad \quad \quad \pi_{\text{new}} = \argmax_{\bar{\pi}\in\Pi} \E_{\rs\sim\bar{\rho}_{\pi_{\text{old}}}, \ra\sim\pi_{\text{old}}}\big[ L^{\text{clip}}\big], \nonumber\\
    &\quad \quad \quad \text{where for given }s, a,  \text{ and }  \rr(\bar{\pi}) = \frac{\bar{\pi}(a|s)}{\pi_{\text{old}}(a|s)}\nonumber\\
    &L^{\text{clip}}= \min\Big( \rr(\bar{\pi}) A_{\pi_{\text{old}}}(s, a), \text{clip}\big( \rr(\bar{\pi}), 1\pm\epsilon\big)A_{\pi_{\text{old}}}(s, a)\Big).\nonumber
\end{align}
\begin{itemize}
    \item Drift functional:
    \begin{align}
        &\mathfrak{D}_{\pi}(\bar{\pi}|s) = \E_{\ra\sim\pi}\Big[ \text{ReLU}\Big( \big[\rr(\bar{\pi}) - \text{clip}\big( \rr(\bar{\pi}), 1\pm \epsilon\big) \big] A_{\pi}(s, \ra) \Big) \Big], \text{ with }\nu_{\pi}^{\bar{\pi}} = \bar{\rho}_{\pi}.\nonumber
    \end{align}
    \item Neighbourhood operator: trivial $\mathcal{N}\equiv \Pi$. Note, however, that in practical implementations the policy is updated with $\epsilon_{\text{PPO}}$ steps of gradient ascent with gradient-clipping threshold $M$. This corresponds to a neighbourhood of an L2-ball of radius $M\epsilon_{\text{PPO}}$ in the policy parameter space.
    \item Sampling distribution: $\beta_{\pi} = \bar{\rho}_{\pi}$.
\end{itemize}

\subsection*{PPO-KL \citep{hsu2020revisiting}}
\vspace{-10pt}
\begin{align}
    \pi_{\text{new}} = \argmax_{\bar{\pi}\in\Pi}\E_{\rs\sim\bar{\rho}_{\pi_{\text{old}}}, \ra\sim\bar{\pi}}\big[ A_{\pi_{\text{old}}}(\rs, \ra)\big] - \tau\overline{\text{D}}_{\text{KL}}(\pi_{\text{old}}, \bar{\pi}),\nonumber
\end{align}

\begin{itemize}
    \item Drift functional:
    \begin{align}
        &\mathfrak{D}_{\pi}(\bar{\pi}|s) = \tau \text{D}_{\text{KL}}\big(\pi(\cdot|\rs), \bar{\pi}(\cdot|\rs)\big), \text{ with }\nu_{\pi}^{\bar{\pi}} = \bar{\rho}_{\pi}.\nonumber
    \end{align}
    \item Neighbourhood operator: the same as in PPO.
    \item Sampling distribution: $\beta_{\pi} = \bar{\rho}_{\pi}$.
\end{itemize}

\subsection*{MDPO \citep{tomar2020mirror}}
\vspace{-10pt}
\begin{align}
    \pi_{\text{new}} = \argmax_{\bar{\pi}\in\Pi}\E_{\rs\sim\beta_{\pi_{\text{old}}}, \ra\sim\bar{\pi}}\big[ A_{\pi_{\text{old}}}(\rs, \ra)\big] - \frac{1}{t_{\pi_{\text{old}}}}\overline{\text{D}}_{\text{KL}}\big(\bar{\pi}, \pi_{\text{old}} \big).\nonumber
\end{align}
\begin{itemize}
    \item Drift functional:
    \begin{align}
        &\mathfrak{D}_{\pi}(\bar{\pi}|s) = \frac{1}{t_{\pi}}D_{\text{KL}}\big( \bar{\pi}(\cdot|s), \pi(\cdot|s) \big), \quad \text{ with }\nu_{\pi}^{\bar{\pi}}= \beta_{\pi}.\nonumber
    \end{align}
    \item Neighbourhood operator: trivial $\mathcal{N} = \Pi$.
    \item Sampling distribution: $\beta_{\pi} = \bar{\rho}_{\pi}$ for on-policy MDPO, and $\beta_{\pi_{n}}(s) = \frac{1}{n+1}\sum_{i=0}^{n} \bar{\rho}_{\pi_i}(s)$ for off-policy MDPO.
\end{itemize}

\clearpage

\section{Instructions for Implementation of Off-Policy Mirror Learning}
\label{appendix:off-policy}
In the case of off-policy learning, estimating $\E_{\ra\sim\bar{\pi}}\big[ Q_{\pi_{\text{old}}}(s, \ra) \big]$ is not as straighforward as in Equation (\ref{eq:monte-carlo}), since sampling actions from the replay buffer is not equivalent to sampling actions from $\pi_{\text{old}}(\cdot|s)$ anymore. The reason for this is that while sampling an action from the buffer, we also sample a past policy, $\pi_{\text{hist}}$, which was used to insert the action to the buffer. To formalise it, we draw a past policy from some distribution dictated by the buffer, $\pi_{\text{hist}}\sim h\in\mathcal{P}(\Pi)$, and then draw an action $\ra\sim\pi_{\text{hist}}$. To account for this, we use the following estimator,
\begin{align}
    \frac{\bar{\pi}(\ra|s)}{\pi_{\text{hist}}(\ra|s)}Q_{\pi_{\text{old}}}(s, \ra).\nonumber
\end{align}
Note that this requires that the value $\pi_{\text{hist}}(\ra|s)$ has also been inserted in the buffer. The expectation of the new estimator can be computed as
\begin{align}
    &\E_{\pi_{\text{hist}}\sim h, \ra\sim\pi_{\text{hist}}}\Big[ \frac{\bar{\pi}(\ra|s)}{\pi_{\text{hist}}(\ra|s)}Q_{\pi_{\text{old}}}(s, \ra)\Big] \nonumber\\
    &= \sum_{\pi_{\text{hist}}}h(\pi_{\text{hist}})\sum_{a\in\mathcal{A}}\pi_{\text{hist}}(a|s)\frac{\bar{\pi}(a|s)}{\pi_{\text{hist}}(a|s)}Q_{\pi_{\text{old}}}(s, a)\nonumber\\
    &= \sum_{a\in\mathcal{A}}\bar{\pi}(a|s)Q_{\pi_{\text{old}}}(s, a)\sum_{\pi_{\text{hist}}}h(\pi_{\text{hist}})\nonumber\\
    &= \sum_{a\in\mathcal{A}}\bar{\pi}(a|s)Q_{\pi_{\text{old}}}(s, a) = \E_{\ra\sim\bar{\pi}}\big[ Q_{\pi_{\text{old}}}(s, \ra) \big].\nonumber
\end{align}
Hence, the estimator has the desired mean.



\end{document}